\def\eqref#1{equation~\ref{#1}}
\def\ceil#1{\lceil #1 \rceil}
\def\1{\bm{1}}
\def\vb{{\bm{b}}}
\def\vd{{\bm{d}}}
\def\vf{{\bm{f}}}
\def\vh{{\bm{h}}}
\def\vo{{\bm{o}}}
\def\vr{{\bm{r}}}
\def\vx{{\bm{x}}}
\def\vy{{\bm{y}}}
\def\vz{{\bm{z}}}
\def\mH{{\bm{H}}}
\def\mM{{\bm{M}}}
\def\mR{{\bm{R}}}
\def\mW{{\bm{W}}}
\def\mX{{\bm{X}}}
\DeclareMathAlphabet{\mathsfit}{\encodingdefault}{\sfdefault}{m}{sl}
\SetMathAlphabet{\mathsfit}{bold}{\encodingdefault}{\sfdefault}{bx}{n}
\DeclareMathOperator*{\argmax}{arg\,max}
\newcommand{\kibitz}[2]{\ifnum\Comments=1{\color{#1}{#2}}\fi}
\newcommand{\twadd}[1]{\kibitz{black}{#1}}
\newcommand{\kibitzs}[2]{\ifnum\Commentss=1{\color{#1}{#2}}\fi}
\definecolor{english}{rgb}{0.0, 0.5, 0.0}
\newcommand{\dcpadd}[1]{\kibitz{black}{#1}}
\def\thmheadbrackets#1#2#3{%
  \thmname{#1}\thmnumber{\@ifnotempty{#1}{ }\@upn{#2}}%
  \thmnote{ {\the\thm@notefont[#3]}}}
\newtheoremstyle{brakets}
  {}
  {}
  {\itshape}
  {}
  {\bfseries}
  {.}
  { }
  {\thmheadbrackets{#1}{#2}{#3}}
\theoremstyle{brakets}
\newtheorem{lemma}{Lemma}
\newtheorem{definition}{Definition}
\newcommand{\shorte}{\textup{\texttt{=}}}
\newcommand{\name}{DeLU}
\newcolumntype{L}{>{$}l<{$}}
\newcolumntype{C}{>{$}c<{$}}
\newcolumntype{R}{>{$}r<{$}}
\newcolumntype{P}[1]{>{\centering\arraybackslash}p{#1}}
\title{Deep Contract Design via Discontinuous Networks}
\author{%
  Tonghan Wang \\
  Harvard University\\
  \texttt{twang1@g.harvard.edu} \\
  \And
  Paul D{\"{u}}tting \\
  Google Switzerland  \\
  \texttt{duetting@google.com} \\
  \And 
  Dmitry Ivanov \\
  Israel Institute of Technology \\
  \texttt{divanov@campus.technion.ac.il} \\
  \AND 
    Inbal Talgam-Cohen \\
  Israel Institute of Technology \\
  \texttt{italgam@cs.technion.ac.il} \\
  \And
  David C. Parkes \\
  Harvard University\footnotemark[1] \\
  \texttt{parkes@eecs.harvard.edu} \\
}
\begin{document}

\maketitle
\renewcommand{\thefootnote}{\fnsymbol{footnote}}
\footnotetext[1]{Also DeepMind, London UK}
\renewcommand{\thefootnote}{\arabic{footnote}} 

\begin{abstract}
Contract design involves a {\em principal} who establishes contractual agreements about payments for outcomes that arise from the actions of an {\em agent}. In this paper, we initiate the study of deep learning for the automated design of optimal contracts. We introduce a novel representation: the {\em Discontinuous ReLU (DeLU) network}, which models the principal's utility as a discontinuous piecewise affine function of the design of a contract where each piece corresponds to the agent taking a particular action. DeLU networks implicitly learn closed-form expressions for the incentive compatibility constraints of the agent and the utility maximization objective of the principal, and support parallel inference on each piece through linear programming or interior-point methods that solve for optimal contracts. We provide empirical results that demonstrate success in approximating the principal's utility function with a small number of training samples and scaling to find approximately optimal contracts on problems with a large number of actions and outcomes.
\end{abstract}


\section{Introduction}

Contract theory studies the setting where a {\em principal} seeks to design a contract for rewarding an {\em agent} on the basis of the uncertain outcomes caused by the agent's private actions~\cite{bolton2004contract,martimort2009theory}. Typical examples include a landlord who enters into a summer rental with a contract that includes penalties in the case of damage; a homeowner who engages a firm to complete a kitchen renovation with a contract that conditions payments on timely completion or functioning appliances; or an individual who employs a freelancer to do some design work with a contract that includes bonuses for completing the job. 


A contract specifies payments to the agent, conditioned on outcomes. The principal is self-interested, with a value for each outcome  and a cost for making payments. The agent is also self-interested, and responds to a contract by choosing an action that maximizes its expected utility (expected payment minus the cost of  an action). 
%
%
The problem is to find a contract that maximizes the principal's  utility (expected value minus expected payment), given that the agent will best respond to the contract. In economics, this is referred to as a problem of {\em moral hazard}, in that the agent is  willing to privately act in its best interest given the contract (the ``hazard" is that the  behavior of the agent may be to the detriment of the principal).
%

The importance of contract design is evidenced by the 2016 Nobel Prize awarded to O.~Hart and B.~Holmström~\cite{Nobel16} and its broad application to real-world problems.
Contract design is one of the three fundamental problems in the realm of economics involving asymmetric information and incentives, along with {\em mechanism design}~\cite{borgers2015introduction} and {\em signalling} (Bayesian persuasion)~\cite{kamenica2011bayesian}. However, while both mechanism design~\cite{cai2012algorithmic,cai2012optimal,cai2016duality,babaioff2020simple,gonczarowski2018bounding,gonczarowski2021sample} and signalling~\cite{dughmi2014hardness,dughmi2016algorithmic,cheng2015mixture} have been studied extensively from a computational perspective, \twadd{the contract design problem has only recently received attention and presented distinct computational challenges. For many combinatorial contract settings~\cite{babaioff2012combinatorial}, the conventional approach based on linear programming becomes computationally infeasible and the problem of finding or even approximating the optimal contract becomes intractable~\cite{dutting2021complexity,dutting2022combinatorial,DuettingEFK23}.} \dcpadd{In addition, learning-theoretic results  give worst-case exponential sample complexity bounds for learning an
 approximately-optimal contract~\cite{HoSV16,zhu2022sample}.}
\twadd{As a result, there is a growing demand for a scalable, general purpose, and beyond-worst case approach for computing (near-)optimal contracts.

We are thus motivated to initiate the study of deep learning for optimal contract design ({\em deep contract design}).} This falls within the broader framework of {\em differentiable economics}, which seeks to leverage parameterized representations of differentiable functions for the purpose of optimal economic design~\cite{duetting2023optimal}. 
In regard to learning contracts, 
recent work \cite{HoSV16,CohenKD18,zhu2022sample,DuttingGSW} considers the  distinct setting
of {\em online learning}  for contract design with bandit feedback,
characterizing regret bounds without appeal to deep learning.
%
%

A first innovation of this paper is to introduce a neural network architecture that can well-approximate the principal's utility function.
A close examination of the geometry of the principal's utility function (Sec.~\ref{sec:analysis}) reveals its similarity to fully-connected feed-forward neural networks with  ReLU activations~\cite{agarap2018deep}: both of them are piecewise affine functions~\cite{croce2019provable}. 
However, whereas a ReLU network models a continuous function, the principal's utility function is discontinuous at the boundary of linear regions, where the best response
of the agent changes. Accurate approximation in the vicinity of boundaries is critical because an optimal contract is always located on the boundary (Lemma~\ref{lem:ond} in Sec.~\ref{sec:analysis}), but this is exactly where ReLU approximation error can be large. To handle this, we introduce the {\em Discontinuous ReLU (\name)} network, which recognizes that the linear regions of a ReLU network are decided by activation patterns (the status of all activation units in the network), and conditions a {\em piecewise bias} on these activation patterns. In this way, each linear region has a different bias parameter and can be discontinuous at the boundaries.
Fig.~\ref{fig:case1} illustrates the DeLU and its use to represent the principal's utility function, contrasting this
with a continuous ReLU function.

%
%
%
%
%
%
\begin{figure}
    \centering
    \vspace{-2em}
    \includegraphics[width=\linewidth]{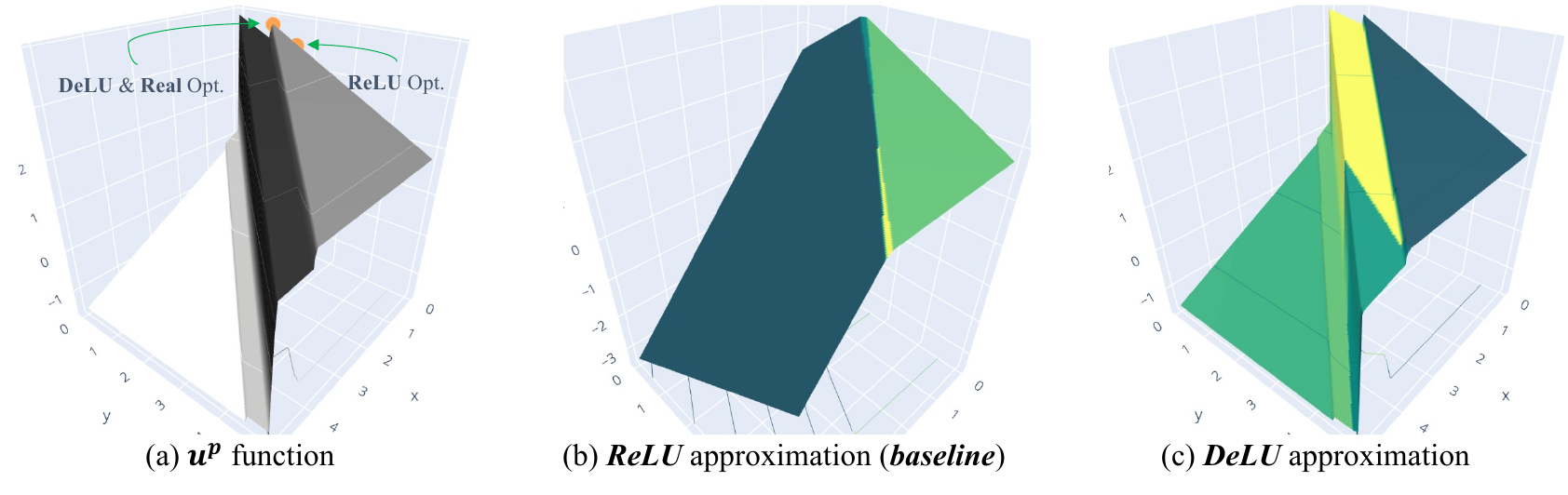}
    \vspace{-2em}
    \caption{DeLU and ReLU approximation on a randomly generated contract design instance with 1,000 actions and 2 outcomes (see Sec.~\ref{sec:case}). The x- and y-coordinates are the payments for each of the two outcomes, and the z-coordinate is the  utility of the principal. \textbf{(a)} The exact surface of the principal's utility function $u^p$. Different colors represent the action selected by the agent upon receiving the contract. \textbf{(b)} A learned \emph{ReLU network} cannot model the discontinuity of the $u^p$ function and yields an incorrect contract as shown in (a). Colors in (b) and (c) represent different activation patterns (linear regions) of the networks. \textbf{(c)} A learned DeLU network represents a discontinuous function and can well-approximate $u^p$, yielding the optimal contract as shown in (a).\label{fig:case1}}
    \vspace{-1.5em}
\end{figure}
A second innovation  in this paper is to introduce a scalable inference technique that can find the contract that maximizes the network output (i.e., the principal's utility). We show that linear regions (pieces) of DeLU networks implicitly learn closed-form expressions for the incentive constraints of the agent and the utility maximization objective of the principal, so that we can use linear programming (LP) 
on each piece to find the global optimum.
However, the time efficiency of this LP method is impeded by the overhead of solving individual LPs, and worsens with  problem size in the absence of suitable parallel computing resources.
%
To  increase the computational efficiency, we develop a gradient-based inference algorithm based on the interior-point method~\cite{potra2000interior} that only requires a few forward and backward passes of the DeLU network 
to find the contract that
 maximizes the network output. This method scales well to large-scale problems and can 
 be readily run in parallel on GPUs.

The effectiveness of introducing piecewise discontinuity into neural networks is demonstrated by our  experimental results.
 The DeLU network well-approximates the principal's utility function even with a small number of training samples, and the inference method remains accurate and efficient as the problem size grows. By synergistically harnessing these two innovations, our method consistently finds near-optimal solutions on a wide range of contract design problems, significantly surpassing the capability of conventional continuous networks.

\textbf{Related work.} \twadd{A longstanding challenge in the deep learning community has been to approximate discontinuous functions with neural networks. While the Universal Approximation Theorem  guarantees the approximation of continuous functions, many  problems involve discontinuity, including
solar flare imaging~\cite{massa2022approximation} and problems coming from mathematics~\cite{della2023discontinuous}. However, establishing a  network} \dcpadd{to represent discontinuous functions} \twadd{is not an easy task. Discontinuities were considered as early as in the 1950s, when} \dcpadd{Rosenblatt~\cite{rosenblatt1958perceptron} introduced the perceptron model and its single-layer of step activation functions}. \twadd{Following this work, others  modeled
 discontinuity through the use of different discontinuous activation functions~\cite{forti2003global,akhmet2014neural,liu2010robust}}. \dcpadd{However, training these models is  more challenging than 
 the more typical models that make use of continuous activation functions~\cite{della2023discontinuous}, and this has hindered their application.} \twadd{To the best of our knowledge, this paper presents the first discontinuous network architecture with continuous activation functions and stable optimization performance.}

There is a vast and still-growing economics literature on contracts~\cite[see, e.g.,][]{Salanie17,Carroll22}, to which the computational lens has recently been applied. In classic settings, computing the optimal contract is tractable by solving one LP per agent's action, and taking the overall best solution~\cite[see, e.g.,][]{dutting2019simple}. However, LP becomes computationally infeasible for combinatorial contracts, including settings with exponentially-many outcomes~\cite{dutting2021complexity}, actions~\cite{dutting2022combinatorial}, or agent combinations~\cite{DuettingEFK23}. Another issue with the LP-based approach is that it requires complete information in regard to the problem facing the agent (its {\em type}). If the agent has a hidden type, this makes the optimal contract hard to compute~\cite{CastiglioniM021,guruganesh2021contracts,castiglioni2022designing}, or otherwise complex~\cite{AlonDLT23}. One way to deal with these complexities is to focus on simple contracts, in particular \emph{linear contracts} (commission-based),
which are  popular in practice~\cite[e.g.,][]{Carroll15,dutting2019simple,Carroll19}. 

A distinct but related approach combines contract design with learning, where efforts have concentrated on online learning theory \cite{HoSV16,CohenKD18,zhu2022sample,DuttingGSW}. These works show exponential lower bounds on the achievable regret for general contracts in worst-case settings, and sublinear regret bounds for linear contracts. \twadd{Additionally, contracts have  been studied from the perspective of multi-agent reinforcement learning~\cite{MuZT22,ChristoffersenHH23,dong2021birds}. The problem of  \emph{strategic classification} has also established a formal connection between strategy-aware classifiers and contracts~\cite{KleinbergR19,KleinbergR20,AlonDPTT20}.} \dcpadd{There is also interest in the application of contract theory  to the AI alignment problem~\cite{DBLP:conf/aies/Hadfield-Menell19a}.}
However, the use of learning methods for solving optimal contracts, as opposed to auctions~\cite{duetting2023optimal}, remains largely unexplored. This gap in the literature can be partly attributed to the inherent complexity of finding optimal contracts, as the principal utility depends on the agent's action, which must conform to incentive compatibility (IC) constraints that are  not observable by the principal. 

\section{Preliminaries}\label{sec:preliminaries}

\textbf{Offline contract learning problem.} 
The contract design problem is defined with elements 
$\mathcal{C}=\langle\mathcal{A}, \mathcal{O}, \mathcal{F}, c, p, v\rangle$, and involves a single principal and a single 
agent. The agent selects an action $a$ in the finite action space $\mathcal{A}$, $|\mathcal{A}|$=$n$. Action $a$ leads to a distribution $p(\cdot|a)$ over the outcomes in $\mathcal{O}$, $|\mathcal{O}|$=$m$, and incurs a cost $c(a)\in\mathbb{R}_{\ge 0}$ to the agent. The valuation of each outcome $o_j\in \mathcal{O}$ for the principal is decided by the value function $v:\mathcal{O}\rightarrow \mathbb{R}_{\ge 0}$. 
The principal sets up a {\em contract}, $\vf\in \mathcal{F}\subset \mathbb{R}_{\ge 0}^m$, which influences the action selected by the agent. A contract $\vf = (f_1, f_2, \cdots, f_m)^\top$ specifies the payment $f_j\geq 0$ made to the agent by the principal in the event of outcome $o_j$. On receiving a contract $\vf$, the agent selects the action $a^*(\vf)$ maximizing its utility $u^a(\vf; a) = \mathbb{E}_{o\sim p(\cdot|a)}\left[f_o\right]-c(a)$. For a given action $a$ of the agent, the principal gets utility $u^p(\vf;a)=\mathbb{E}_{o\sim p(\cdot|a)}\left[v_o-f_o\right]$, which is the principal's expected value minus payment. In the case that
the induced action is the best response of the agent given the contract, we write $u^p(\vf)=\mathbb{E}_{o\sim p(\cdot|a^*(\vf))}\left[v_o-f_o\right]$. The goal in
optimal contract design is to find the contract 
that maximizes the principal's utility without access to $p(\cdot|a)$ and the action taken by the agent:
\begin{align}
    \vf^* &= \arg\max_\vf u^p\left(\vf\right) = \arg\max_\vf \mathbb{E}_{o\sim p(\cdot|a^*(\vf))}\left[v_o-f_o\right].
\end{align}
\textbf{ReLU piecewise-affine networks and activation patterns.} 
A fully-connected neural network with a piecewise linear activation function (e.g., ReLU and leaky ReLU) and a linear output layer represents a \emph{continuous} piecewise affine function~\cite{arora2018understanding,croce2019randomized}. A piecewise affine function is defined as follows.
\begin{definition}[Piecewise affine function]
A function $g: \mathbb{R}^d\rightarrow\mathbb{R}$ is {\em piecewise affine} if there exists a finite set of polytopes $\{\mathcal{D}_i\}_{i=1}^P$ such that $\cup_{i=1}^P \mathcal{D}_i=\mathbb{R}^d$, $\mathcal{D}_i\cap \mathcal{D}_{j\ne i}=\varnothing$, and $g$ is a linear function $\rho_i:\mathcal{D}_i\rightarrow\mathbb{R}$ when restricted to $\mathcal{D}_i$. We call $\rho_i$ a {\em linear piece} of $g$. 
\end{definition}
We now follow~\cite{croce2019provable} and introduce some local properties of the piecewise affine function that is represented by a ReLU network. Suppose there are $L$ hidden layers in a network $g$, with sizes $[n_1, n_2, \cdots, n_L]$. $\mW^{(l)}\in\mathbb{R}^{n_l\times n_{l-1}}$ and $\vb^{(l)}\in\mathbb{R}^{n_l}$ are the weights and biases of layer $l$. Let $n_0=d$ denote the input space dimension. We consider a ReLU network with one-dimensional outputs, and the output layer has weights $\mW^{(L+1)}\in\mathbb{R}^{1\times n_L}$ and a  bias $b^{(L+1)}\in\mathbb{R}$. With input $\vx\in \mathbb{R}^d$, we have the pre- and post-activation output of layer $l$: $\vh^{(l)}(\vx)=\mW^{(l)}\vo^{(l-1)}(\vx)+\vb^{(l)}$ and $\vo^{(l)}(\vx)=\sigma\left(\vh^{(l)}(\vx)\right)$, where 
$\sigma:\mathbb{R}\rightarrow\mathbb{R}$ is an {\em activation function}. In this paper, we consider ReLU activation $\sigma(x)=\max\{x,0\}$~\cite{glorot2011deep,xu2015empirical}, but the proposed method can be extended to other piecewise linear activation functions (e.g., LeakyReLU and PReLU~\cite{xu2015empirical}).
For each hidden unit,  the ReLU {\em activation status} has two values, defined as $1$ when pre-activation $h$ is positive and $0$ when $h$ is strictly negative. The activation pattern of the entire network is defined as follows.
\begin{definition}[Activation Pattern]
    An {\em activation pattern} of a ReLU network $g$ with $L$ hidden layers is a binary vector $\vr=[\vr^{(1)},\cdots,\vr^{(L)}]\in\{0,1\}^{\sum_{l=1}^L n_l}$, where $\vr^{(l)}$ is a 
    {\em layer activation pattern} indicating  activation status of each unit in layer $l$.
\end{definition}
The activation pattern depends on the input $\vx$, and we define function $r:\mathbb{R}^d\rightarrow \{0,1\}^{\sum_{l=1}^L n_l}$ that maps the input to the corresponding activation pattern. For a ReLU network, inputs that have the same activation pattern lie in a polytope, and the activation pattern determines the boundaries of this polytope. To see this, we can write the output of layer $l$, $\vh^{(l)}(\vx)$, as
\begin{align}
    \vh^{(l)}(\vx)=\mW^{(l)}\mR^{(l-1)}(\vx)\left(\mW^{(l-1)}\mR^{(l-2)}(\vx)\left(\cdots\left(\mW^{(1)}\vx+\vb^{(1)}\right)\cdots\right)+\vb^{(l-1)}\right)+\vb^{(l)},\label{equ:relu_layered}
\end{align}
where $\mR^{(k)}$ is a diagonal matrix with diagonal elements equal to the layer activation pattern $\vr^{(k)}$. 
Eq.~\ref{equ:relu_layered} indicates that, when $\vr$ is fixed, $\vh^{(l)}$ is a linear function $\vh^{(l)}(\vx)=\mM^{(l)}\vx+\vz^{(l)}$, where $\mM^{(l)}=\mW^{(l)} \left(\prod_{k=1}^{l-1}\mR^{(l-k)}(\vx)\mW^{(l-k)}\right)$ and $\vz^{(l)}=\vb^{(l)}+\sum_{k=1}^{l-1}\left(\prod_{j=1}^{l-k}\mW^{(l+1-j)}\mR^{(l-j)}(\vx)\right)\vb^{(k)}$. We thus get $\sum_{l=1}^{L}n_l$ half-spaces, with the half-space corresponding to unit $i$ of layer $l$ defined as:
\begin{align}
\Gamma_{l,i}=\left\{\vy\in\mathbb{R}^d | \Delta_i^{(l)}\left(\mM_i^{(l)}\vy+\vz_i^{(l)}\right)\ge0\right\},
\end{align} 
where $\mM_i^{(l)}\vy+\vz_i^{(l)}$ is the output of unit $i$ at layer $l$,
and  $\Delta_i^{(l)}$ is 1 if $\vh^{(l)}_i(\vx)$ is positive, and is -1 otherwise. The input $\vx$  is in the polytope  that is defined by  the intersection of these half-spaces: $\mathcal{D}(\vx) = \cap_{l=1,\cdots,L}\cap_{i=1,\cdots,n_l}\Gamma_{l,i}$.
When restricted to $\mathcal{D}(\vx)$, the ReLU network is a linear function: $g(\vx)=\mW^{(L+1)}\mR^{(L)}\vh^{(L)}(\vx)+b^{(L+1)}$.

\textbf{Interior-point method for optimization problems with inequality constraints.} For a minimization problem with  objective function $q(\vx)$ and inequality constraints $p_i(\vx)> 0, i=1,\cdots,M$, the {\em interior-point method}~\cite{wright2001convergence} introduces a logarithmic {\em barrier function}, $\phi(\vx)=-\sum_{i=1}^M \log(p_i(\vx))$, and finds the minimizer of $q(\vx)+\frac{1}{t} \phi(\vx)$, for some $t>0$. This new objective function is defined on the set of strictly feasible points $\{\vx|p_i(\vx)> 0, i=1,\cdots,M\}$, and approximates the original objective as $t$ becomes large. Given this, we can solve for a series of optimization problems for increasing values of $t$. In the $k$-th round, $t^{(k)}$ is set to $\mu \cdot t^{(k-1)}$, where $\mu>1$ is a constant, $t^{(0)}>0$ is an initial value, and the problem is solved (e.g., by Newton initialized at $\vx^{(k-1)}$) to yield $\vx^{(k)}$. Assume that we solve the barrier problem  exactly for each iterate, then 
to achieve a desired accuracy level of $\epsilon>0$, we need $n_{\mathtt{barrier}}=\log(M/(t^{(0)}\epsilon)) / \log(\mu)$ rounds of optimization.

\section{ Geometry of Optimal Contracts}
\label{sec:analysis}

In this section, we introduce some properties of the principal's utility function, $u^p: \mathcal{F}\rightarrow \mathbb{R}$, which will motivate our method in Sec.~\ref{sec:method}. 
First, we show that $u^p$ is a piecewise affine function.
\begin{lemma}\label{lem:piecewise-affine}
    The principal's utility function $u^p$ is a piecewise affine function.
\end{lemma}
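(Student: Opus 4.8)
The plan is to partition the contract space $\mathcal{F}$ according to which action the agent best-responds with, and then to verify that $u^p$ is affine on each part of this partition. First I would fix an action $a \in \mathcal{A}$ and record two elementary linearity facts. The agent's utility $u^a(\vf;a) = \mathbb{E}_{o\sim p(\cdot|a)}[f_o] - c(a)$ is affine in $\vf$, since $\mathbb{E}_{o\sim p(\cdot|a)}[f_o] = \sum_{j=1}^m p(o_j|a)\,f_j$ is a fixed linear combination of the coordinates of $\vf$ and $c(a)$ is a constant. Likewise, the principal's utility under action $a$, $u^p(\vf;a) = \mathbb{E}_{o\sim p(\cdot|a)}[v_o - f_o]$, is affine in $\vf$, its constant term being $\sum_{j=1}^m p(o_j|a)\,v_{o_j}$.

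Next I would define, for each action $a$, the best-response region
\[
\mathcal{F}_a = \{\vf \in \mathcal{F} : u^a(\vf;a) \ge u^a(\vf;a') \text{ for all } a' \in \mathcal{A}\}.
\]
Each defining constraint rearranges to $\sum_{j=1}^m \big(p(o_j|a) - p(o_j|a')\big)\,f_j \ge c(a) - c(a')$, a single linear inequality in $\vf$. Hence $\mathcal{F}_a$ is the intersection of finitely many half-spaces (the $n-1$ best-response inequalities together with the half-spaces defining $\mathcal{F}$, e.g. $\mathbb{R}^m_{\ge 0}$), and is therefore a polytope. Because the agent always has at least one utility-maximizing action, $\bigcup_{a\in\mathcal{A}} \mathcal{F}_a = \mathcal{F}$, a finite cover of $\mathcal{F}$ by $n$ polytopes. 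On each $\mathcal{F}_a$ the induced best response is $a$, so there $u^p(\vf) = u^p(\vf;a)$, which I have already shown is affine; since $\mathcal{A}$ is finite, this exhibits $u^p$ as affine on each of finitely many polytopal pieces, which is the claim.

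The main obstacle is reconciling this cover with the disjointness demanded by the definition of a piecewise affine function: the polytopes $\mathcal{F}_a$ are closed and overlap exactly on the boundaries where the agent is indifferent between two or more actions, i.e. where $u^a(\vf;a) = u^a(\vf;a')$. I would resolve this with a fixed tie-breaking rule—say, assigning each $\vf$ to the best-responding action of smallest index—which refines $\{\mathcal{F}_a\}$ into a genuine partition $\{\tilde{\mathcal{F}}_a\}$, each $\tilde{\mathcal{F}}_a \subseteq \mathcal{F}_a$ being obtained by intersecting with finitely many further half-spaces, while leaving $u^p$ affine on each part. The conceptual point worth emphasizing is that, unlike $\max_{a} u^a(\vf;a)$ (which is convex), $u^p$ evaluates a \emph{different} affine map $u^p(\cdot;a)$ on each region, so it need not agree across an indifference boundary—precisely the discontinuity that motivates the \name{} architecture—yet this does not affect the present claim, since piecewise affineness requires only affinity within each piece.
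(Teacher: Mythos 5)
Your proof is correct and follows essentially the same route as the paper's: partition $\mathcal{F}$ into best-response regions cut out by the incentive-compatibility half-spaces and observe that $u^p(\cdot\,;a)$ is affine on each. The only difference is that you additionally handle the tie-breaking needed to make the pieces disjoint (as the paper's definition of piecewise affine formally requires), a detail the paper's proof leaves implicit.
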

\begin{proof}
The principal utility depends on the agent action. For contracts in the intersection of the following $n-1$ half spaces that represent \emph{incentive compatibility constraints}, the agent's action is $a_i$:
\begin{align}
    \Gamma_{i,j} = \left\{\vf\in\mathcal{F}\ |\ \mathbb{E}_{o\sim p(\cdot|a_i)}\left[f_o\right]-c(a_i)\ge \mathbb{E}_{o\sim p(\cdot|a_j)}\left[f_o\right]-c(a_j)\right\}, \forall j\ne i.
\end{align}
Moreover, when agent action $a_i$ remains unchanged, $u^p$ changes linearly with $\vf$: $
\alpha u^p(\vf;a_i) + \beta  u^p(\vf';a_i) = \alpha  \mathbb{E}_{o\sim p(\cdot|a_i)}\left[v_o-f_o\right]+\beta \mathbb{E}_{o\sim p(\cdot|a_i)}\left[v_o-f'_o\right]
= \mathbb{E}_{o\sim p(\cdot|a_i)}\left[v_o-(\alpha  f_o + \beta  f_o')\right] = u^p(\alpha \vf+\beta \vf';a_i). 
$
Therefore, when restricted to the polytope $\mathcal{Q}_i = \cap_{j\ne i} \Gamma_{i,j}, \forall i$, $u^p$ is linear.
\end{proof}

Based on Lemma~\ref{lem:piecewise-affine}, we define a \emph{linear piece} of function $u^p$ as $\mu_i^p:\mathcal{Q}_{i}\rightarrow\mathbb{R}$, where $\mathcal{Q}_{i}$ defines the set of contracts that motivates the agent to take action $i$. Lemma~\ref{lem:piecewise-affine} can be easily extended to the agent's utility function $u^a$, and the linear pieces of function $u^p$ and $u^a$ share the same set of domains $\{\mathcal{Q}_{i}\}$. We define a linear piece of function $u^a$ as $\mu_i^a:\mathcal{Q}_{i}\rightarrow\mathbb{R}$. 
We next observe:
\begin{restatable}{lemma}{lemmadisc}\label{lem:discontinous}
    The principal's utility function $u^p$ can be discontinuous on the boundary of linear pieces.
\end{restatable}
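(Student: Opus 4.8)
The plan is to show that the two linear pieces meeting along a shared boundary generically take different values there, so that the one-sided limits of $u^p$ disagree. First I would fix two actions $a_i$ and $a_j$ whose polytopes $\mathcal{Q}_i$ and $\mathcal{Q}_j$ share a facet, and pick a contract $\vf_0$ in the relative interior of that facet. By the construction in Lemma~\ref{lem:piecewise-affine}, this facet is exactly where the incentive-compatibility constraint between $a_i$ and $a_j$ is tight, i.e. the agent is indifferent: $\mathbb{E}_{o\sim p(\cdot|a_i)}[f_{0,o}] - c(a_i) = \mathbb{E}_{o\sim p(\cdot|a_j)}[f_{0,o}] - c(a_j)$.

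Next I would compute the jump between the two pieces at $\vf_0$. Since $u^p$ equals $\mu_i^p$ on $\mathcal{Q}_i$ and $\mu_j^p$ on $\mathcal{Q}_j$, and each is the restriction of the corresponding affine map, the difference of the one-sided limits is $\mu_i^p(\vf_0) - \mu_j^p(\vf_0) = \mathbb{E}_{o\sim p(\cdot|a_i)}[v_o - f_{0,o}] - \mathbb{E}_{o\sim p(\cdot|a_j)}[v_o - f_{0,o}]$. Substituting the indifference condition to eliminate the payment terms (the payment difference equals $c(a_i) - c(a_j)$) leaves $\mu_i^p(\vf_0) - \mu_j^p(\vf_0) = \left(\mathbb{E}_{o\sim p(\cdot|a_i)}[v_o] - \mathbb{E}_{o\sim p(\cdot|a_j)}[v_o]\right) - \left(c(a_i) - c(a_j)\right)$; that is, the jump equals the difference in expected welfare of the two actions.

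The crux is then to argue this quantity is nonzero, which is where the discontinuity actually arises: the boundary is pinned down solely by the agent's \emph{payments}, whereas the jump depends on the principal's \emph{values} $v_o$, which are free parameters not constrained by the boundary. Hence I would exhibit a minimal instance — two actions, two outcomes, with distinct outcome distributions and a value vector $v$ chosen so that the welfare difference above is nonzero (a generic $v$ works) — and verify that the indifference point $\vf_0$ lies in $\mathcal{F} \subset \mathbb{R}^m_{\ge 0}$. The two affine pieces then take different values at $\vf_0$, so approaching $\vf_0$ from $\mathcal{Q}_i$ versus $\mathcal{Q}_j$ yields different limits, and $u^p$ is discontinuous there.

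I expect the only real subtlety to be the bookkeeping of what ``discontinuous'' means for a function whose value on the boundary is set by a tie-break: the statement should be read as the failure of continuity via disagreeing one-sided limits, which the computation above establishes directly regardless of the tie-break convention. A secondary care point is ensuring the shared facet is full-dimensional (has nonempty relative interior), so that genuine limits from both sides exist; this holds whenever the two IC constraints are not redundant, which I would arrange explicitly in the constructed instance.
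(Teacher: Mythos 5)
Your proof is correct and takes essentially the same approach as the paper's: both use the agent's indifference at the boundary (the tight IC constraint) to show that the jump between the two adjacent affine pieces equals the difference in expected welfare, $\left(\mathbb{E}_{o\sim p(\cdot|a_i)}[v_o]-c(a_i)\right)-\left(\mathbb{E}_{o\sim p(\cdot|a_j)}[v_o]-c(a_j)\right)$, which can be nonzero. Your added care about exhibiting an explicit instance and checking that the shared facet is full-dimensional is a reasonable tightening of the paper's ``it is possible that'' step, but it is the same argument.
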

The proof in 
Appx.~\ref{appx:proof} follows the idea that the agent is indifferent between action $a_i$ and $a_j$ given a contract $\vf$ on the boundary of two neighboring linear pieces $\mu^p_i$ and $\mu^p_{j\neq i}$: $\mu^a_i(\vf)\shorte\mu^a_j(\vf)\shorte u^a(\vf)$. However, the principal's utility equals the expected value minus the cost of an action minus $u^a(\vf)$. As the expected value minus the cost can be different in $\mu^p_i$ and $\mu^p_{j}$, $u^p$ can be discontinuous at $\vf$.

We then define the optimality of a contract and analyze the structure of optimal contracts.
\begin{definition}[Piecewise and Global Optimal Contracts]
A contract $\vf_i^*\in \mathcal{Q}_{i}$ is piecewise optimal if $u^p(\vf_i^*)\ge u^p(\vf), \forall \vf\in\mathcal{Q}_{i}$. A contract $\vf^*$ is global optimal if $u^p(\vf^*)\ge u^p(\vf), \forall \vf\in\mathcal{F}$.
\end{definition}
\begin{restatable}{lemma}{lemmaond}\label{lem:ond}
    The global optimal contract is on the boundary of a linear piece.
\end{restatable}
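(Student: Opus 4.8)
The plan is to leverage the decomposition established in Lemma~\ref{lem:piecewise-affine}: the feasible set $\mathcal{F}$ is covered by the polytopes $\{\mathcal{Q}_i\}$, and on each piece $u^p$ coincides with the affine map $\mu_i^p(\vf)=\mathbb{E}_{o\sim p(\cdot|a_i)}[v_o-f_o]$. The crucial observation is that the gradient of this map, $\nabla_\vf \mu_i^p = -p(\cdot|a_i)$, is \emph{never} the zero vector: since $p(\cdot|a_i)$ is a probability distribution over $\mathcal{O}$ its entries are nonnegative and sum to one, so at least one coordinate is strictly positive. Hence on every piece $u^p$ restricts to a \emph{non-constant} affine function, and its maximizer over the convex polytope $\mathcal{Q}_i$ cannot sit in the interior.

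Concretely, I would take a global optimal contract $\vf^*$ and let $\mathcal{Q}_{i^*}$ be a piece containing it (one exists because the pieces cover $\mathcal{F}$), so that locally $u^p=\mu_{i^*}^p$. Arguing by contradiction, suppose $\vf^*$ lies in the topological interior of $\mathcal{Q}_{i^*}$, so some open ball $B$ around $\vf^*$ is contained in $\mathcal{Q}_{i^*}$. Moving from $\vf^*$ a small step in the ascent direction $-p(\cdot|a_{i^*})$ — that is, uniformly lowering the payments in proportion to the outcome probabilities — keeps the contract inside $B\subseteq\mathcal{Q}_{i^*}$ while strictly increasing the objective, since the directional derivative equals $\|p(\cdot|a_{i^*})\|^2>0$. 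This contradicts the global optimality of $\vf^*$, so $\vf^*$ must lie on $\partial\mathcal{Q}_{i^*}$, the boundary of a linear piece. (If the piece $\mathcal{Q}_{i^*}$ is degenerate and has empty interior in $\mathbb{R}^m$, every point of it is already a boundary point and the claim holds trivially.) This is precisely the standard fact that a non-constant affine function attains its maximum over a polytope only on a proper face, which is also what justifies restricting the per-piece search of Sec.~\ref{sec:method} to facets.

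The main obstacle I anticipate is not the ascent argument itself but the bookkeeping at shared boundaries. Because the pieces are closed and overlap on common faces, a point may belong to several $\mathcal{Q}_i$ and the value of $u^p$ there depends on how the agent breaks ties between indifferent actions (recall from Lemma~\ref{lem:discontinous} that $u^p$ may jump across such faces). This does not affect the logic — I only need that $\vf^*$ lies in \emph{some} piece and cannot be interior to it — but I would fix the convention that the agent breaks ties in the principal's favor so that a global maximizer exists and $u^p(\vf^*)$ is well defined. A secondary point to dispatch cleanly is that $\mathcal{F}\subseteq\mathbb{R}^m_{\ge0}$ may be unbounded: since $f_o\ge0$ gives $\mu_i^p(\vf)\le\mathbb{E}_{o\sim p(\cdot|a_i)}[v_o]$, each piece's objective is bounded above, so the descent in payments cannot continue indefinitely and must terminate at a facet, consistent with the stated conclusion.
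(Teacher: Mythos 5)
Your proposal is correct and follows essentially the same route as the paper's proof: assume the optimum is interior to some piece $\mathcal{Q}_i$, perturb by $-\delta\, p(\cdot|a_i)$ for small $\delta>0$ so the agent's best response is unchanged, and observe the principal's utility strictly increases by $\delta\sum_o p(o|a_i)^2>0$, a contradiction. The paper just carries out the same argument more explicitly, quantifying the slack $\epsilon_{i,j}$ in the IC constraints to bound the admissible $\delta$, and likewise defers the non-negativity boundary to a closing remark.
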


Lemma~\ref{lem:ond} 
can be proved by contradiction (as detailed in Appx.~\ref{appx:proof}): if a global optimal contract is not on the boundary, we can always find a solution with greater principal utility. As analyzed in Sec.~\ref{sec:method}, Lemma~\ref{lem:ond} serves as a compelling rationale for introducing our discontinuous networks. Besides discontinuity, there is another network design consideration motivated by the following property.
\begin{restatable}{lemma}{lemmaconvex}\label{lem:cm_ua}
The principal's utility function $u^p$ can be written as a summation of a concave function and a piecewise constant function.
\end{restatable}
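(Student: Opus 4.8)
The plan is to exhibit the decomposition explicitly rather than argue abstractly. First I would rewrite the principal's utility under best response by isolating the payment term. Since $a^*(\vf)$ is the agent's best response, the realized payment $\mathbb{E}_{o\sim p(\cdot|a^*(\vf))}[f_o]$ equals the agent's realized utility plus the cost of the chosen action, i.e. $u^a(\vf) + c(a^*(\vf))$, where $u^a(\vf) = \max_{a\in\mathcal{A}}\left(\mathbb{E}_{o\sim p(\cdot|a)}[f_o] - c(a)\right)$. Substituting this into $u^p(\vf) = \mathbb{E}_{o\sim p(\cdot|a^*(\vf))}[v_o] - \mathbb{E}_{o\sim p(\cdot|a^*(\vf))}[f_o]$ yields
\[
u^p(\vf) = \Big(\mathbb{E}_{o\sim p(\cdot|a^*(\vf))}[v_o] - c(a^*(\vf))\Big) - u^a(\vf),
\]
which is the candidate decomposition: the first bracketed term will play the role of the piecewise constant function and $-u^a(\vf)$ the role of the concave function.

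Second, I would argue that $-u^a(\vf)$ is concave. Each candidate agent utility $\mathbb{E}_{o\sim p(\cdot|a)}[f_o] - c(a) = \sum_o p(o|a)\, f_o - c(a)$ is an affine function of the contract $\vf$, so $u^a$ is a pointwise maximum of finitely many affine functions and is therefore convex on $\mathcal{F}$; its negation is accordingly concave.

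Third, I would argue that the bracketed term is piecewise constant with respect to the polytope decomposition $\{\mathcal{Q}_i\}$ established in Lemma~\ref{lem:piecewise-affine}. On $\mathcal{Q}_i$ the best response is the fixed action $a_i$, so $\mathbb{E}_{o\sim p(\cdot|a^*(\vf))}[v_o] - c(a^*(\vf))$ reduces to the constant $\mathbb{E}_{o\sim p(\cdot|a_i)}[v_o] - c(a_i)$, independent of $\vf$. Since the $\mathcal{Q}_i$ cover $\mathcal{F}$, this term is constant on each piece and hence piecewise constant. Adding it to the concave term recovers $u^p$, completing the argument.

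I do not expect a serious obstacle here; the decomposition is essentially forced once the payment term is re-expressed through the agent's optimal utility $u^a$. The only point requiring a little care is consistency on the boundaries between pieces, where the best response need not be unique: there I would note that $u^a$ is continuous (being convex on $\mathcal{F}$), whereas the piecewise constant term may jump, and that both terms are well-defined relative to whatever tie-breaking convention is already fixed for $a^*(\vf)$, so the identity holds pointwise everywhere on $\mathcal{F}$.
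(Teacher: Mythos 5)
Your decomposition is exactly the one the paper uses: $u^p(\vf) = \bigl(\mathbb{E}_{o\sim p(\cdot|a^*(\vf))}[v_o] - c(a^*(\vf))\bigr) - u^a(\vf)$, with the bracketed term constant on each piece $\mathcal{Q}_i$ and $-u^a$ supplying the concave part. Where you genuinely diverge is in how you establish convexity of $u^a$. You invoke the standard fact that $u^a(\vf) = \max_{a}\bigl(\sum_o p(o|a) f_o - c(a)\bigr)$ is a pointwise maximum of finitely many affine functions of $\vf$, hence convex; this is correct, one line, and arguably the cleanest possible route. The paper instead proves convexity from first principles: it restricts $u^a$ to a line segment $\vf^{(1)} + \delta\vd$, shows the directional derivative is constant while the best response is fixed, and then does a case analysis at a boundary crossing (using that the new action $a_j$ strictly beats the old action $a_i$ just past the boundary) to show the one-sided derivative can only jump upward. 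Both arguments are valid; yours buys brevity and rests on a textbook lemma, while the paper's buys an explicit picture of \emph{where} and \emph{why} the slope of $u^a$ increases (at the piece boundaries), which dovetails with its discussion of discontinuities and boundary geometry elsewhere. Your closing remark about tie-breaking on boundaries is also sound: the identity holds pointwise for whichever maximizer the agent selects, since the value and cost terms are evaluated at that same action. No gap.
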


The  proof in Appx.~\ref{appx:proof} commences by establishing the convexity of function $u^a$ and subsequently formulating $u^p$ as a function of $u^a$. This property motivates us to introduce \emph{concavity} into our network design. In Appx.~\ref{appx:concave}, we discuss how to achieve this by imposing non-negativity constrains on network weights and analyze the impact of this design choice on the performance.


\section{DeLU Neural Networks}\label{sec:method}

Given the piecewise-affine geometry, the preceding analysis shows a close connection between the principal's utility function $u^p$ (Lemma~\ref{lem:piecewise-affine}) and fully-connected ReLU networks (Sec.~\ref{sec:preliminaries}). However, ReLU functions are continuous and cannot represent the abrupt changes in $u^p$ at the boundaries of the linear pieces. This lack of representational capacity is problematic because the optimal contracts are on the boundary (Lemma~\ref{lem:ond}), which is precisely where the ReLU approximation errors can be large. Consequently, ReLU networks are not well suited to deep contract design. In this section, we introduce the new {\em Discontinuous ReLU (DeLU) network}, which provides \emph{a discontinuity at boundaries between pieces}, making it a suitable function approximator for the $u^p$ function.

\subsection{Architecture}


The DeLU network architecture  supports different biases for different linear pieces. Since a linear piece can be identified by the corresponding activation pattern, we propose to condition these \emph{piecewise biases} on activation patterns.
Specifically, we learn a {\em DeLU network $\xi:\mathcal{F}\rightarrow \mathbb{R}$} (Fig.~\ref{fig:delu}) to approximate the principal's utility function, 
mapping a contract to the corresponding utility of the principal. The first part of a DeLU network is a sub-network similar to a conventional ReLU network. This sub-network $\eta$ has $L$ fully-connected hidden layers with ReLU activation and a weight matrix at the output layer, i.e., $\eta$ is parameterized as $\theta_\eta=\{\mW^{(1)}, \vb^{(1)}, \cdots, \mW^{(L)}, \vb^{(L)}, \mW^{(L+1)}\}$, which includes weights and biases for $L$ hidden layers and weights for the output ($(L+1)$-th) layer. The DeLU network is different from a conventional ReLU network at the bias of the output (last) layer ($b^{(L+1)}$), and we introduce a new method to generate the last-layer biases. 

Since there are no activation units at the output layer, given an input contract $\vf$, we can obtain the activation pattern $r(\vf)$ by a forward pass of the network up until the last layer. To condition $b^{(L+1)}$ on the activation pattern, we train another sub-network $\zeta:[0,1]^{\sum_{l=1}^L n_l}\rightarrow\mathbb{R}$, that maps $r(\vf)$ to the bias of the output layer. 
We use a two-layer fully-connected network with Tanh activation to represent $\zeta$ and denote its parameters as $\theta_\zeta$.
In this way, contracts in the same linear piece of the DeLU network share the same bias value, enabling the network to express discontinuity at the boundaries while keeping other properties of network
$\eta$ unchanged. In Appx.~\ref{appx:why_bias_network}, we discuss why we adopt a neural network, instead of a simpler function, to model the bias term.

\begin{figure}
\vspace{-1em}
\hspace{-0.5em}
\begin{minipage}{0.4\textwidth}
    \centering
    \includegraphics[scale=0.54]{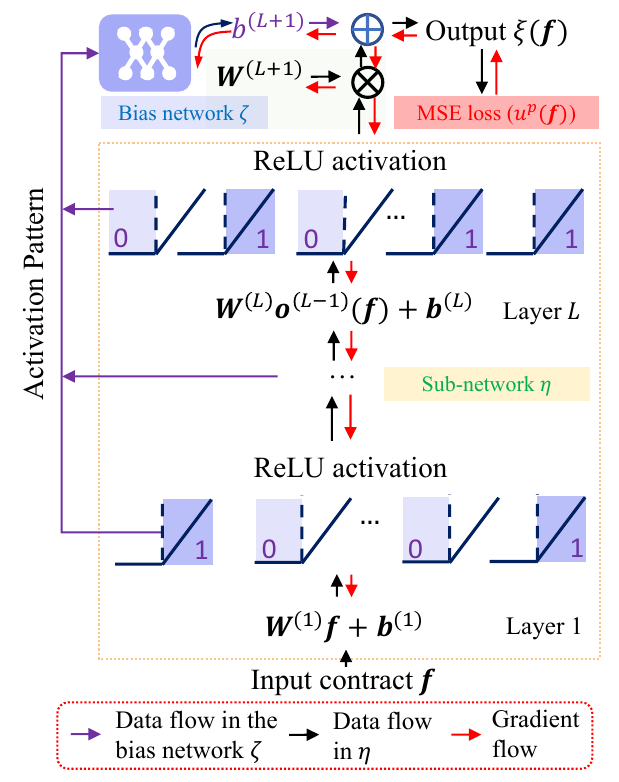}
    \vspace{-0.2cm} 
    \caption{The DeLU architecture.}
    \label{fig:delu}
\end{minipage}\hfill
\begin{minipage}{0.59\textwidth}
\vspace{-1.2em}
\begin{algorithm}[H]
\hsize=\textwidth 
\caption{Parallel Gradient-Based Inference}
\label{alg:parallel}
\begin{algorithmic}
\STATE {\bfseries Input:} $\mX^{(0)}\in\mathbb{R}^{K\times m}$; DeLU network $\xi$ with trained parameters; $\epsilon$; $\mu>1$; $t^{(0)}$; $\alpha$.

\FOR{$k\in\{1,\cdots,\ceil{\frac{\log(N/t^{(0)}\epsilon)}{\log(\mu)}}\}$}
\STATE $t^{(k)}=\mu^{k-1}t^{(0)}$; $\mX^{(k, 0)}\leftarrow \mX^{(k-1)}$;
\COMMENT{Each round starts with the optimal solution in the last round.}
\FOR{$j=1,2,\cdots$}
    \STATE $\mH^{(l)} = \mM_i^{(l)}\mX^{(k, j-1)}+\vz_i^{(l)}$ /*Get hidden layer outputs in a single forward pass.*/
    \STATE $\bm\Phi^{(k,j-1)} = -\sum_{l,i}\log\Delta_i^{(l)}\mH^{(l)}_i$/*Barrier term.*/
    \STATE $d\mX = \frac{\partial}{\partial \mX^{(k, j-1)}}\left[\mW^{(L+1)}\mR^{(L)}_i\mH^{(L)} - \frac{\bm\Phi^{(k,j-1)}}{t^{(k)}}\right]$
    \STATE \textbf{if} $\|d\mX\|_\infty<\epsilon$ \textbf{then} /*Converged at $t^{(k)}$*/
    \STATE \ \ $\mX^{(k)}\leftarrow \mX^{(k, j-1)}$; break; /*To the next round*/
    \STATE \textbf{else} $\mX^{(k, j)}$=$\mX^{(k, j-1)} +\alpha d\mX$ /*Gradient ascent*/
\ENDFOR
\ENDFOR
	
\end{algorithmic}
\end{algorithm}
\end{minipage}
\vspace{-1em}
\end{figure}

\subsection{Training and inference}\label{sec:method-inference}

The DeLU network $\xi$, including sub-networks $\eta$ and $\zeta$, is end-to-end differentiable. For training, we randomly sample $K$ contracts and the corresponding principal's utilities: $\mathcal{T}_K=\{(\vf_i, u^p(\vf_i))\}_{i=1}^{K}$. Feeding a training sample $\vf_i$ as input, we get an approximated principal's utility:
$\xi(\vf_i;\theta_\eta,\theta_\zeta) = \eta(\vf_i;\theta_\eta) + \zeta(r(\vf_i);\theta_\zeta)$, and the network $\xi$ is trained to minimize the following loss function in $T$ epochs (Appx.~\ref{appx:exp_settings} gives more details on network architecture, infrastructure, and training):
\begin{align}\label{equ:loss}
    \mathcal{L}_{\mathcal{T}_K}(\theta_\eta,\theta_\zeta)=\frac{1}{K}\sum_{i=1}^K \left[\xi(\vf_i;\theta_\eta,\theta_\zeta)-u^p(\vf_i)\right]^2.
\end{align}
Given a trained \name~network, $\xi:\mathcal{F}\rightarrow \mathbb{R}$, we have an approximation of the principal's utility function. 
The next step is to find a contract that maximizes the learned utility function,
that is $\vf^* = \argmax_\vf \xi(\vf)$. 
We call this the {\em inference process}. Since the function represented by a \name~network is discontinuous, conventional first- or second-order optimization methods are not applicable, and, formally, we need to develop a suitable optimization approach to solve
\begin{align}
    \xi(\vf^*) = \max_{\rho_i} \max_{\vf\in \mathcal{D}_i} \rho_i(\vf),
\end{align}
where $\rho_i: \mathcal{D}_i\rightarrow \mathbb{R}$ is a linear piece of network $\xi$. This motivates us to first find the piecewise optimal contracts and then get the global optimum by comparing the piecewise optima.

\subsubsection{Linear programming based inference}\label{sec:method-lp_max}

A first approach finds the optimal contract for each piece using linear programming (LP). Contracts in the same linear piece $\rho$ result in the same activation pattern $\vr$, and the DeLU network is a linear function on the piece.
 %
%
In particular, the optimization problem of finding the piecewise optimum is:
\begin{align}
  \arg\max_\vf\ \ \ & \mW^{(L+1)}\mR^{(L)}\left[\mM^{(L)}\vf+\vz^{(L)}\right] \ \ s.t.\ \ \Delta_i^{(l)}\left(\mM_i^{(l)}\vf+\vz_i^{(l)}\right)\ge0, \forall l\in [L],\  i\in[n_l], \nonumber
\end{align}
where $[L]=\{1,\cdots,L\}$, $[n_l]=\{1,\cdots,n_l\}$, the objective is the linear function represented by the DeLU network on piece $\rho$, and the constraints require that the activation pattern remains $\vr$. The definitions of $\Delta_i^{(l)}$, $\mR^{(L)}$, $\mM^{(l)}$, and $\vz^{(l)}$ are the same as in Sec.~\ref{sec:preliminaries}, but with the activation pattern fixed to $\vr$. 
We omit the last-layer bias in the objective  because the activation pattern is fixed for this piece, and thus the bias is constant and does not influence the piecewise 
optimal solution.

For each LP, there are $m$ decision variables, representing payments for $m$ outcomes, and $N=\sum_{l=1}^L n_l$ (the number of neurons) constraints. LPs for each piece can be solved in polynomial time of $m$ and $N$, and quickly in practice via the simplex method. However, a challenge is that there are $2^N$ activation patterns.\footnote{Previous research~\cite{chu2018exact} found some patterns are invalid, and there are actually $n_{N,m} = \sum_{i=0}^m \binom{N}{m-i}$ pieces.} 
For a DeLU network with a moderate size, we can enumerate all these pieces. For a larger DeLU network, we can approximate by collecting random contract samples and solving LPs with the corresponding activation patterns. Parallelizing these LPs may achieve better time efficiency. However, this improvement is limited by the overhead of solving a single LP and the required amount of suitable parallel computational resources.
 In the next section, we introduce a gradient-based method that provides better efficiency in solving single problems and better parallelism through making use of
 GPUs. We compare these two inference methods in Sec.~\ref{sec:exp-opt_eff}.


 \begin{figure}
 \vspace{-2em}
    \centering
    \includegraphics[scale=0.4]{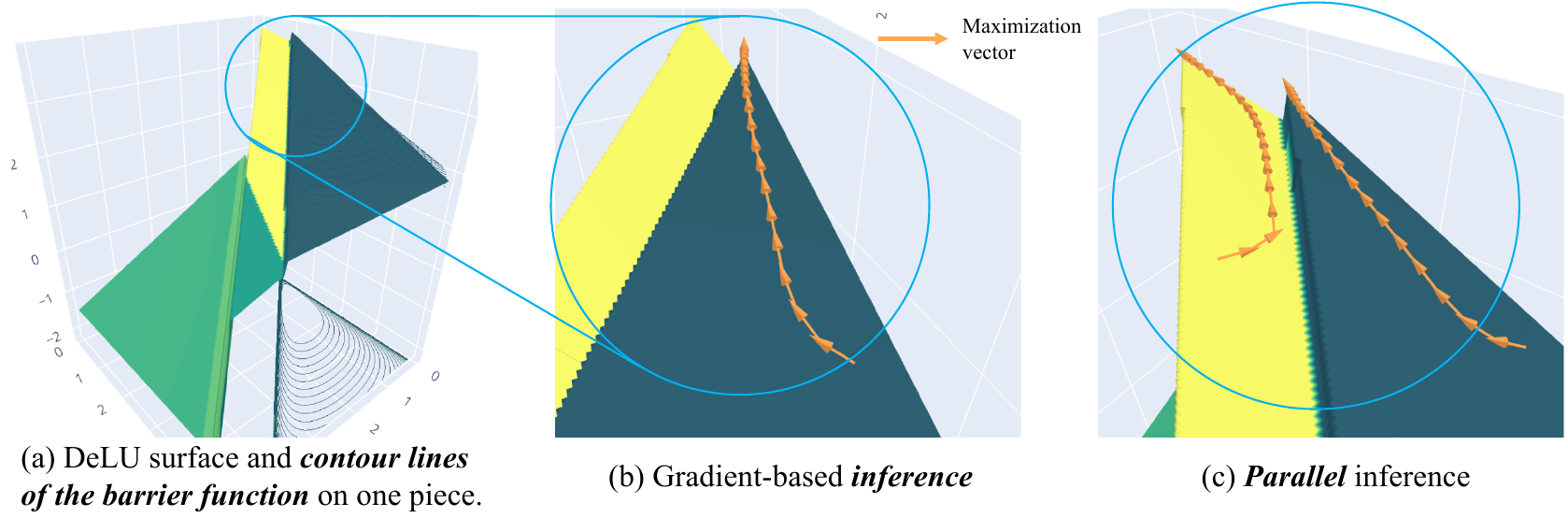}
    \caption{The gradient-based inference  method for finding piecewise optima,
    illustrated on the same instance as Fig.~\ref{fig:case1}.
    \label{fig:case2}}
    \vspace{-1.5em}
\end{figure}
\subsubsection{Gradient-based inference}\label{sec:method-grad_inference}

The gradient-based inference method is based on the interior-point method (Sec.~\ref{sec:preliminaries}) 
and
finds piecewise optimal solutions via forward and backward passes of the DeLU network.
%
Specifically, on each linear piece, $\rho$, we adopt the following {\em barrier function}:
\begin{align}
    \phi(\vf) = -\sum_{l=1}^L\sum_{i=1}^{n_l}\log\left[\Delta_i^{(l)}\left(\mM_i^{(l)}\vf+\vz_i^{(l)}\right)\right].
\end{align}

At the $k$-th round, the objective function is
\begin{align}
    \phi^{(k)}(\vf)=-\mW^{(L+1)}\mR^{(L)}\left[\mM^{(L)}\vf+\vz^{(L)}\right] + \frac{1}{t^{(k)}}\phi(\vf).
\end{align}
We use gradient descent initialized at $\vf^{(k-1)}$, and update $\vf$ with $\partial\phi^{(k)}(\vf)/\partial \vf$ to find the minimizer.
A forward pass of the DeLU network  gives $\phi^{(k)}(\vf)$ and a backward pass is sufficient to calculate $\partial\phi^{(k)}(\vf)/\partial \vf$. Since forward and backward passes are naturally parallelized in modern deep learning frameworks, this method can be parallelized by processing multiple $\vf$ simultaneously. Alg.~\ref{alg:parallel} gives the matrix-form expression of this parallel computation. The input $\mX^{(0)}\in\mathbb{R}^{K\times m}$ can be the training set or a random sample set, and we discuss the difference of these two settings in Appx.~\ref{appx:exp}.

\twadd{\textbf{Inference performance and boundary alignment degree}. The performance of the proposed inference algorithms depends on the DeLU approximation quality, especially on the degree of alignment between the DeLU boundaries and the} \dcpadd{ground-truth} \twadd{linear piece boundaries. An interesting question is whether our training setup can achieve a high \emph{boundary alignment degree}.} \dcpadd{A reason to think this is possible comes from observing} \twadd{that the MSE training loss (Eq.~\ref{equ:loss}) is sensitive to misalignment between the DeLU and} \dcpadd{true}
\twadd{boundaries. In particular, given that the jump of the utility function at boundary points can be arbitrarily large, a slight misalignment between DeLU and} \dcpadd{true} \twadd{boundaries can lead to a large increase in the MSE loss.} 
\dcpadd{Related to this, we explore an extension of the} \twadd{gradient-based inference method to make it more robust to possible boundary misalignment. When annealing the coefficient of the barrier function, we can check whether the  principal's utility increases for each $t^{(k)}$ value. A non-increasing utility indicates that we encounter inaccurate boundaries, and we can stop the inference to seek more robustness. We call inference with this ``early stop" mechanism the  \emph{sub-argmax gradient-based inference method}.

In Sec.~\ref{sec:exp-bad}, we empirically evaluate the boundary alignment degree achieved by DeLU, and demonstrate that near-optimal} 
\dcpadd{contract-design} \twadd{performance is closely associated with high boundary alignment. We also show that the sub-argmax variation can improve performance of gradient-based inference, especially on tasks where the boundary alignment degree is low.}

\subsection{Illustrating DeLU-based contract design}\label{sec:case}

We first illustrate the DeLU approach on
 a  randomly generated contract design instance, in this case with 2 outcomes and 1,000 actions.
In this instance, we sample
the values for outcomes and costs for actions uniformly from $[0, 10]$.
 The outcome distribution for an action is further generated by applying $\mathtt{SoftMax}$ to a Gaussian 
random vector in $\mathbb{R}^m$. 
In Fig.~\ref{fig:case1}~(a), we show the exact surface of the principal's utility function $u^p$ on such 
a randomly generated instance. We  observe that $u^p$ is a discontinuous, piecewise affine function, where each piece corresponds to an action of the agent. In Fig.~\ref{fig:case1}~(b) and~(c), we show the $u^p$ surface approximated by each of a ReLU and a DeLU network trained with 40$K$ samples.
These two networks have a   similar architecture, with a single hidden layer of 8 ReLU units. The difference is 
the additional, last-layer biases of the DeLU network, which are
 dependent on activation patterns. 
Whereas the ReLU network cannot represent the discontinuity of $u^p$, and thus gives an incorrect  optimal contract, the DeLU network replicates the $u^p$ surface, and gives an accurate optimal contract (Fig.~\ref{fig:case1}~(a)). Another interesting observation is that the DeLU network may use multiple
 pieces to represent an original linear region (Fig.~\ref{fig:case1}~(c)). 
In Fig.~\ref{fig:case2}, we further 
illustrate the inference process of the gradient-based method on this  instance.


\section{Empirical Evaluation}\label{sec:exp-matrix}
In this section, we design experiments to study the following aspects of DeLU contract design: \textbf{(1) Optimality} (Sec.~\ref{sec:exp-opt_eff}): Can DeLU networks give solutions close to the optimal contracts? How do the solutions compare against those generated by continuous neural networks? \textbf{(2) Sample efficiency} (Sec.~\ref{sec:exp-opt_eff}): How many training samples are required for accurate DeLU approximation? Is the proposed method applicable to large-scale problems? \textbf{(3) Time efficiency} (Sec.~\ref{sec:exp-max}): How does the computation overhead required for DeLU learning and inference compare to those of other solvers? \textbf{(4) Inference} (Sec.~\ref{sec:exp-max}): Does the gradient-based inference method provide a good tradeoff between accuracy and time efficiency? \textbf{(5) Boundary alignment degree} (Sec.~\ref{sec:exp-bad}): How does the boundary alignment degree affect optimality?

\begin{figure}[t]
\vspace{-1.5em}
    \centering
    \includegraphics[width=\linewidth]{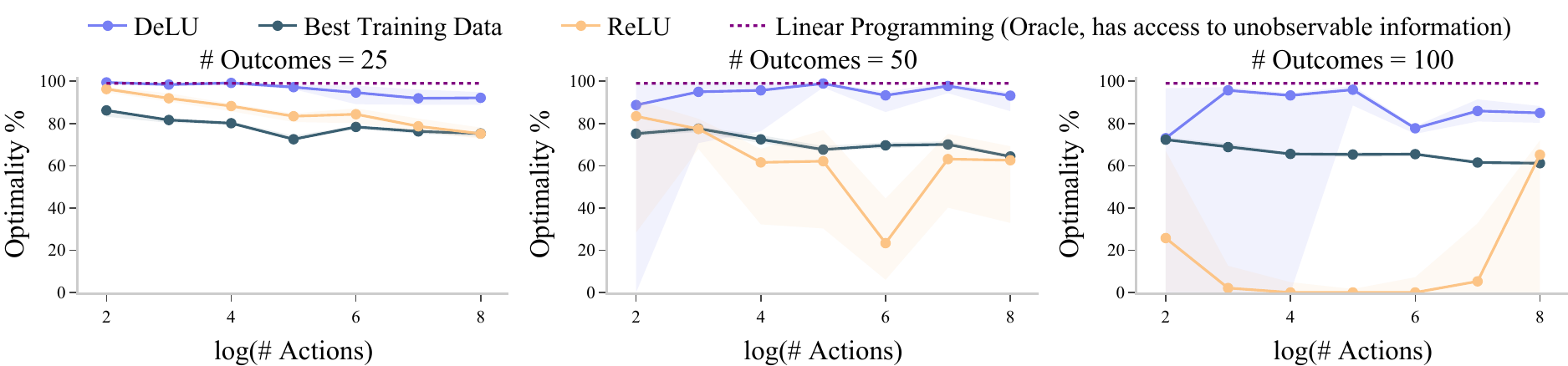}
    \caption{Optimality (normalized principal utility) of DeLU, ReLU and a direct LP solver ($\mathtt{Oracle\ LP}$), for problems with increasing sizes.\label{fig:f1}}
    \vspace{-1em}
\end{figure}
\begin{figure}[t]
    \centering
    \includegraphics[width=0.32\linewidth]{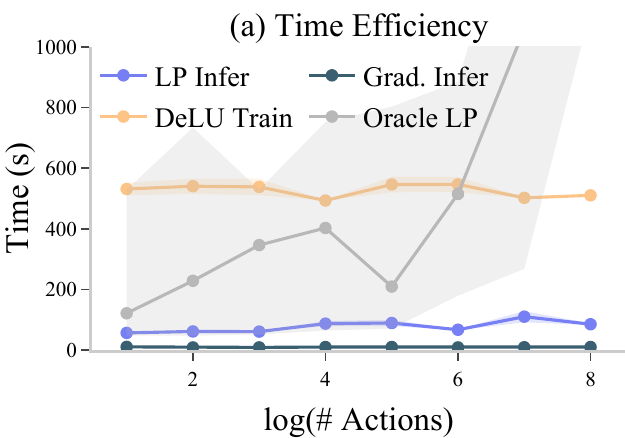}\hfill
    \includegraphics[width=0.32\linewidth]{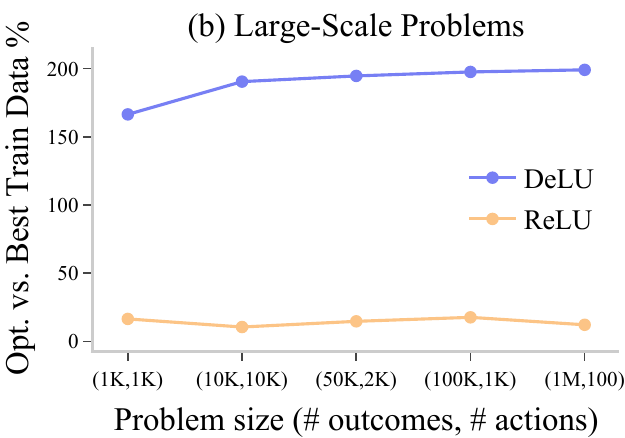}\hfill
    \includegraphics[width=0.32\linewidth]{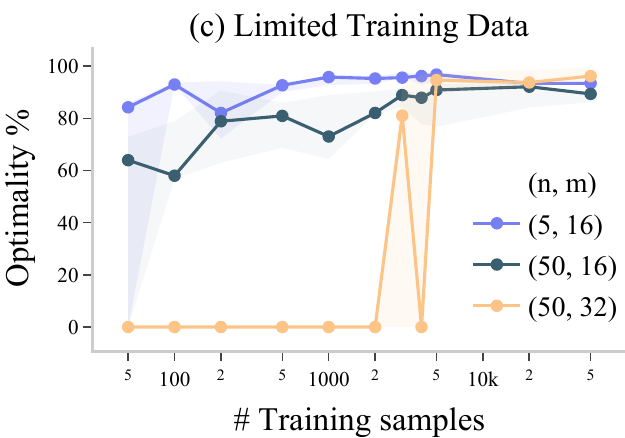}
    \caption{\twadd{(a) DeLU training and inference time compared against $\mathtt{Oracle\ LP}$. (b) DeLU performance on large-scale problems. (c) DeLU performance with a small number of training samples. } \label{fig:syn}}
    \vspace{-1em}
\end{figure}

\textbf{Problem generation.} Experiments are carried out on random synthetic examples. The outcome distributions $p(\cdot|a)$ are generated by applying $\mathtt{SoftMax}$ on a Gaussian random vector in $\mathbb{R}^m$. The outcome value $v_o$ is uniform on $[0, 10]$. The action cost is a mixture, $c(a)=(1-\beta_p) c_r(a)+ \beta_p c_i(a)$, where $c_r(a)=\alpha_p \mathbb{E}_{o\sim p(\cdot|a)}[v_o]$ for scaling factor $\alpha_p>0$ is a correlated cost that is proportional to the expected value of the action, $c_i(a)$ is an independent cost and uniform on [0, 1], and $\beta_p$ controls the weight of the independent cost. We test different problem sizes by changing the number of outcomes $m$ and actions $n$. For each problem size, we test various combinations of $(\alpha_p,\beta_p)$ to consider the influence of  correlation costs. Different methods are compared on the same set of problems.

\subsection{Optimality and efficiency.}\label{sec:exp-opt_eff}
In Fig.~\ref{fig:f1}, we compare DeLU against ReLU networks as well
as a baseline {\em linear programming (LP)} solver ($\mathtt{Oracle\ LP}$). 
$\mathtt{Oracle\ LP}$ refers to the use of LP for directly  solving the contract design problem, not for  inference on a trained DeLU network. It solves $n$ LP problems, one for each action $a$, where the objective is to maximize $u^p$ with  the incentive compatibility (IC) constraints associated with the action $a$. The best of these $n$ solutions becomes the optimal contract. $\mathtt{Oracle\ LP}$ has access to the outcome distributions $p(\cdot|a)$ (to construct the IC constraints) that are unobservable to the DeLU and ReLU learners, but gives a \emph{benchmark} for the optimality of the proposed method. 

We test different problem sizes in Fig.~\ref{fig:f1}. Specifically, we set the number of outcomes $m$ to 25 ($1^{\text{st}}$ column), 50 ($2^{\text{nd}}$ column), and 100 ($3^{\text{rd}}$ column) and increase the number of actions $n$ from $2^2$ to $2^8$. For each problem size, 12 combinations of $\alpha_p$ and $\beta_p$ are tested, with $(\alpha_p,\beta_p)\in\{0.5, 0.7, 0.9\}\times\{0, 0.3, 0.6, 0.9\}$. The median performance as well as the first and third quartile (shaped area) of these 12 combinations are shown. When reporting optimality, we normalize the principal's utility achieved by DeLU/ReLU LP-based inference via dividing them by the value returned by $\mathtt{Oracle\ LP}$. To ensure a fair comparison, DeLU and ReLU networks have the same architecture, with one hidden layer of 32 hidden units. They are trained for 100 epochs with 50$K$ random samples.
\begin{figure}[t]
    \centering
    \includegraphics[width=\linewidth]{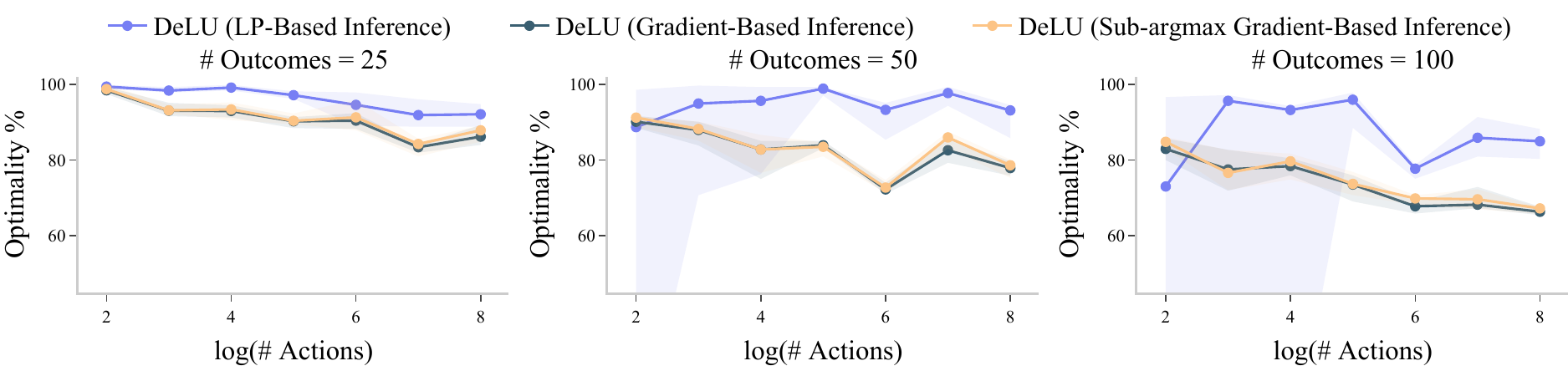}
    \caption{Comparing the optimality of the two inference methods for solving the global optimum given a learned DeLU network, considering increasing problem sizes.\label{fig:infer}}
    \vspace{-1em}
\end{figure}

DeLU consistently achieves better \textbf{optimality} than ReLU networks \twadd{($+28.29\%$) and the best contract in the training set (+$23.84\%$) across all problem sizes. The performance gap is particularly large for large-scale problems. For example, when the number of outcomes is larger than $1,000$ (Fig.~\ref{fig:syn} (b)), the ReLU networks return solutions much worse ($<20\%$) than training samples, while DeLU can obtain a solution at least $1.7$ times better than the best training data. As for \textbf{sample efficiency}, as shown in Fig.~\ref{fig:syn} (c), DeLU achieves near-optimality even with a very small training set.}

\subsection{Comparing the two DeLU inference methods.}\label{sec:exp-max}

In Fig.~\ref{fig:infer}, we fix $m$ to 25, 50, 100 and increase $n$ from $2^2$ to $2^8$ to compare the optimality of the proposed inference methods. We again test the same 12 combinations of $(\alpha_p, \beta_p)$. The median performance and the first and third quartile are shown for each problem size. Gradient-based inference is parallelized for 50$K$ training samples on GPUs, while LP inference is parallelized for 5 linear pieces on CPUs. We can see that LP inference consistently provides a better solution, as the optimality of gradient-based inference ($-7.56\%$) is limited by the number of gradient descent steps.

The advantage of gradient-based inference is its time efficiency. In Fig.~\ref{fig:syn}~(a), we compare the inference time for different problem sizes (with $m$ fixed to 25). Gradient-based inference saves around $50$-$90\%$ overhead compared to LP inference. We also note that the DeLU training and inference costs do not increase with the problem size. By comparison, the overhead of the direct LP solver grows quickly with the problem scale.

\begin{figure}[t]
    \centering
    \includegraphics[width=0.32\linewidth]{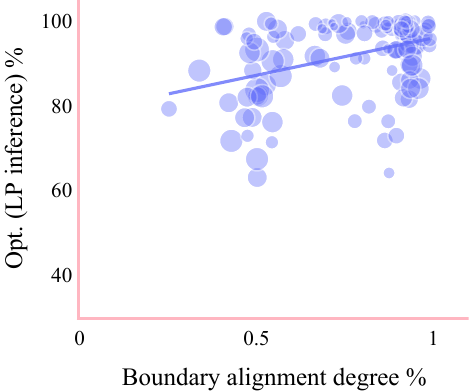}\hfill
    \includegraphics[width=0.32\linewidth]{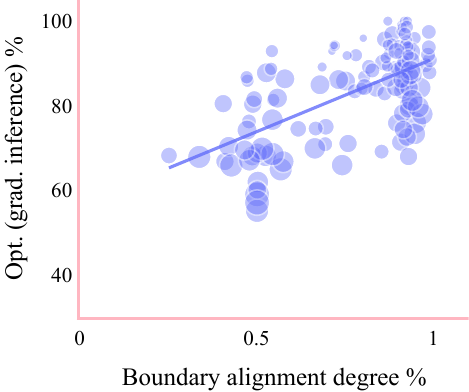}\hfill
    \includegraphics[width=0.32\linewidth]{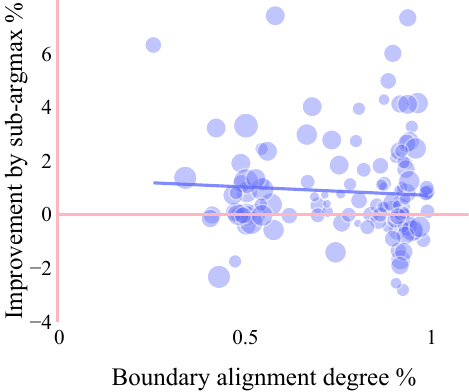}
    \caption{\twadd{Correlation between boundary alignment degree (x-axis) and DeLU optimality (y-axis; left: LP-based inference; middle: gradient-based inference); also, the improvement achieved by sub-argmax inference compared to gradient-based inference (right). Trend lines (linear regression) are shown. Point sizes ($\log(mn)$) indicate the corresponding problem size.}\label{fig:bad}}
    \vspace{-1.5em}
\end{figure}

\subsection{Boundary alignment degree and its influence on DeLU optimality}\label{sec:exp-bad}

\twadd{In Fig.~\ref{fig:bad}, we study the relationship between DeLU performance and the degree of boundary alignment. 

For each  contract design problem, we first calculate the boundary alignment degree achieved by the DeLU network. For this, we test a large number (50K) of contracts, and check whether they are simultaneously on the DeLU model and true utility boundary. Specifically, for each contract we randomly sample 10K directions  and assess linearity of the DeLU utility model and the true  utility function in each direction.} \dcpadd{The principal's utility function is piecewise linear in the proximity of an interior point.} \twadd{Conversely, when a point lies on a boundary, there is a jump in utility within its proximity, rendering the utility unable to pass a linearity test in some direction. In particular, if a function is non-linear in >20\% of random directions, we mark the contract sample as being on a boundary} \dcpadd{(we use the exact same approach to check for boundaries of the  DeLU utility function 
and the principal's true  utility function).}
\twadd{The {\em boundary alignment degree} is calculated as the percentage of overlapped boundary points (\# contract samples on both  DeLU and true boundaries / \# contract samples on true boundaries).}

\twadd{Each point in Fig.~\ref{fig:bad} represents a  contract design problem, and we use the same set of problems as in the previous experiments. The x-axis is the DeLU boundary alignment degree, and the y-axis is the optimality of DeLU with LP-based inference (Fig.~\ref{fig:bad} left) and gradient-based inference (Fig.~\ref{fig:bad} middle). The right plot gives the}
\dcpadd{optimality} \twadd{improvement achieved by the sub-argmax} \dcpadd{inference method compared to standard gradient-based inference}.
%
\twadd{From Fig.~\ref{fig:bad} left, we observe that DeLU achieves good boundary alignment degrees (>80\%) for most} \dcpadd{contract design problems,} \twadd{and that a strong positive correlation exists between the boundary alignment degree and the optimality of LP-based inference. This result indicates that the alignment degree between DeLU and true boundaries} \dcpadd{is important in supporting}  \twadd{good performance of LP-based inference. A similar observation can be made for gradient-based inference.  Fig.~\ref{fig:bad}-right  shows that as the boundary alignment degree increases, the optimality improvement from the sub-argmax inference  compared to standard gradient-based inference becomes less significant.} \dcpadd{This confirms that the sub-argmax inference is especially helpful for contract design problems where the DeLU boundaries are less accurate.}

\vspace*{-6pt}

\section{Closing Remarks}
This paper initiates the investigation of  contract design  from a deep learning perspective, introducing a family of piecewise discontinuous networks and inference techniques that are tailored for deep contract learning. In future work, it will be interesting to take this framework to real-world settings, provide theory in regard to the expressiveness of the function class comprising these piecewise discontinuous functions, and extend to the setting of online learning. \twadd{We expect that our exploration of discontinuous networks can also draw attention to other economics problems involving discontinuity and thereby contribute to advancing AI progress in computational economics.}

\section{Acknowledgements}

This work received funding from the European Research Council (ERC) under the European Union’s Horizon 2020 research and innovation program (grant agreement: 101077862, project name ALGOCONTRACT). We extend our heartfelt appreciation to the anonymous NeurIPS reviewers for their insightful questions and constructive interactions during the review process, which has inspired us to delve deeper into critical inquiries, such as the boundary alignment issue. Their feedback has been important in shaping the quality and depth of our work.

\bibliographystyle{plain}  
\bibliography{neurips_2023}  

\begin{thebibliography}{10}

\bibitem{agarap2018deep}
Abien~Fred Agarap.
\newblock Deep learning using rectified linear units ({ReLU}).
\newblock {\em arXiv preprint arXiv:1803.08375}, 2018.

\bibitem{akhmet2014neural}
Marat Akhmet and Enes Y{\i}lmaz.
\newblock {\em Neural networks with discontinuous/impact activations}.
\newblock Springer, 2014.

\bibitem{AlonDPTT20}
Tal Alon, Magdalen Dobson, Ariel~D. Procaccia, Inbal Talgam{-}Cohen, and Jamie Tucker{-}Foltz.
\newblock Multiagent evaluation mechanisms.
\newblock In {\em AAAI 2020}, pages 1774--1781, 2020.

\bibitem{AlonDLT23}
Tal Alon, Paul D{\"u}tting, Yingkai Li, and Inbal Talgam-Cohen.
\newblock Bayesian analysis of linear contracts.
\newblock In {\em EC '23}, page~66. ACM, 2023.

\bibitem{arora2018understanding}
Raman Arora, Amitabh Basu, Poorya Mianjy, and Anirbit Mukherjee.
\newblock Understanding deep neural networks with rectified linear units.
\newblock In {\em ICLR 2018}, 2018.

\bibitem{babaioff2012combinatorial}
Moshe Babaioff, Michal Feldman, Noam Nisan, and Eyal Winter.
\newblock Combinatorial agency.
\newblock {\em Journal of Economic Theory}, 147(3):999--1034, 2012.

\bibitem{babaioff2020simple}
Moshe Babaioff, Nicole Immorlica, Brendan Lucier, and S~Matthew Weinberg.
\newblock A simple and approximately optimal mechanism for an additive buyer.
\newblock {\em Journal of the ACM}, 67(4):1--40, 2020.

\bibitem{bolton2004contract}
Patrick Bolton and Mathias Dewatripont.
\newblock {\em Contract theory}.
\newblock MIT press, 2004.

\bibitem{borgers2015introduction}
Tilman B{\"o}rgers.
\newblock {\em An introduction to the theory of mechanism design}.
\newblock Oxford University Press, USA, 2015.

\bibitem{cai2012algorithmic}
Yang Cai, Constantinos Daskalakis, and S~Matthew Weinberg.
\newblock An algorithmic characterization of multi-dimensional mechanisms.
\newblock In {\em STOC 2012}, pages 459--478, 2012.

\bibitem{cai2012optimal}
Yang Cai, Constantinos Daskalakis, and S~Matthew Weinberg.
\newblock Optimal multi-dimensional mechanism design: Reducing revenue to welfare maximization.
\newblock In {\em FOCS 2012}, pages 130--139, 2012.

\bibitem{cai2016duality}
Yang Cai, Nikhil~R Devanur, and S~Matthew Weinberg.
\newblock A duality based unified approach to {B}ayesian mechanism design.
\newblock In {\em STOC 2016}, pages 926--939, 2016.

\bibitem{Carroll15}
Gabriel Carroll.
\newblock Robustness and linear contracts.
\newblock {\em American Economic Review}, 105(2):536--63, 2015.

\bibitem{Carroll19}
Gabriel Carroll.
\newblock Robustness in mechanism design and contracting.
\newblock {\em Annual Review of Economics}, 11:139--166, 2019.

\bibitem{Carroll22}
Gabriel Carroll.
\newblock Contract theory.
\newblock In Federico Echenique, Nicole Immorlica, and Vijay~V. Vazirani, editors, {\em Online and Matching-Based Market Design}. Cambridge University Press, 2022.
\newblock To appear.

\bibitem{CastiglioniM021}
Matteo Castiglioni, Alberto Marchesi, and Nicola Gatti.
\newblock Bayesian agency: Linear versus tractable contracts.
\newblock {\em Artificial Intelligence}, 307:103684, 2022.

\bibitem{castiglioni2022designing}
Matteo Castiglioni, Alberto Marchesi, and Nicola Gatti.
\newblock Designing menus of contracts efficiently: The power of randomization.
\newblock In {\em EC 2022}, pages 705--735, 2022.

\bibitem{cheng2015mixture}
Yu~Cheng, Ho~Yee Cheung, Shaddin Dughmi, Ehsan Emamjomeh-Zadeh, Li~Han, and Shang-Hua Teng.
\newblock Mixture selection, mechanism design, and signaling.
\newblock In {\em FOCS 2015}, pages 1426--1445, 2015.

\bibitem{ChristoffersenHH23}
Phillip~J.K. Christoffersen, Andreas~A. Haupt, and Dylan Hadfield-Menell.
\newblock Get it in writing: Formal contracts mitigate social dilemmas in multi-agent {RL}.
\newblock In {\em AAMAS 2023}, page 448–456, 2023.

\bibitem{chu2018exact}
Lingyang Chu, Xia Hu, Juhua Hu, Lanjun Wang, and Jian Pei.
\newblock Exact and consistent interpretation for piecewise linear neural networks: A closed form solution.
\newblock In {\em KDD 2018}, pages 1244--1253, 2018.

\bibitem{CohenKD18}
Alon Cohen, Argyrios Deligkas, and Moran Koren.
\newblock Learning approximately optimal contracts.
\newblock In {\em SAGT 2022}, pages 331--346, 2022.

\bibitem{croce2019provable}
Francesco Croce, Maksym Andriushchenko, and Matthias Hein.
\newblock Provable robustness of {ReLU} networks via maximization of linear regions.
\newblock In {\em AISTATS 2019}, pages 2057--2066, 2019.

\bibitem{croce2019randomized}
Francesco Croce and Matthias Hein.
\newblock A randomized gradient-free attack on {ReLU} networks.
\newblock In {\em GCPR 2019}, pages 215--227, 2019.

\bibitem{della2023discontinuous}
Francesco Della~Santa and Sandra Pieraccini.
\newblock Discontinuous neural networks and discontinuity learning.
\newblock {\em Journal of Computational and Applied Mathematics}, 419:114678, 2023.

\bibitem{dong2021birds}
Heng Dong, Tonghan Wang, Jiayuan Liu, Chi Han, and Chongjie Zhang.
\newblock Birds of a feather flock together: A close look at cooperation emergence via multi-agent rl.
\newblock {\em arXiv preprint arXiv:2104.11455}, 2021.

\bibitem{dughmi2014hardness}
Shaddin Dughmi.
\newblock On the hardness of signaling.
\newblock In {\em FOCS 2014}, pages 354--363, 2014.

\bibitem{dughmi2016algorithmic}
Shaddin Dughmi and Haifeng Xu.
\newblock Algorithmic {B}ayesian persuasion.
\newblock In {\em STOC 2016}, pages 412--425, 2016.

\bibitem{dutting2022combinatorial}
Paul D{\"u}tting, Tomer Ezra, Michal Feldman, and Thomas Kesselheim.
\newblock Combinatorial contracts.
\newblock In {\em FOCS 2021}, pages 815--826, 2022.

\bibitem{DuettingEFK23}
Paul D{\"u}tting, Tomer Ezra, Michal Feldman, and Thomas Kesselheim.
\newblock Multi-agent contracts.
\newblock In {\em STOC 2023}, pages 1311--1324, 2023.

\bibitem{duetting2023optimal}
Paul D{\"{u}}tting, Zhe Feng, Harikrishna Narasimhan, David~C. Parkes, and Sai~Srivatsa Ravindranath.
\newblock Optimal auctions through deep learning: Advances in differentiable economics.
\newblock {\em Journal of the ACM}, Forthcoming 2023.
\newblock First version, ICML 2019, pages 1706–1715. PMLR, 2019.

\bibitem{DuttingGSW}
Paul D{\"u}tting, Guru Guruganesh, Jon Schneider, and Joshua~Ruizhi Wang.
\newblock Optimal no-regret learning for one-sided {L}ipschitz functions.
\newblock In {\em ICML 2023}, volume 202, pages 8836--8850. PMLR, 2023.

\bibitem{dutting2019simple}
Paul D{\"u}tting, Tim Roughgarden, and Inbal Talgam-Cohen.
\newblock Simple versus optimal contracts.
\newblock In {\em EC 2019}, pages 369--387, 2019.

\bibitem{dutting2021complexity}
Paul D\"utting, Tim Roughgarden, and Inbal Talgam-Cohen.
\newblock The complexity of contracts.
\newblock {\em SIAM Journal on Computing}, 50(1):211--254, 2021.

\bibitem{forti2003global}
Mauro Forti and Paolo Nistri.
\newblock Global convergence of neural networks with discontinuous neuron activations.
\newblock {\em IEEE Transactions on Circuits and Systems I: Fundamental Theory and Applications}, 50(11):1421--1435, 2003.

\bibitem{glorot2011deep}
Xavier Glorot, Antoine Bordes, and Yoshua Bengio.
\newblock Deep sparse rectifier neural networks.
\newblock In {\em AISTATS 2011}, pages 315--323, 2011.

\bibitem{gonczarowski2018bounding}
Yannai~A Gonczarowski.
\newblock Bounding the menu-size of approximately optimal auctions via optimal-transport duality.
\newblock In {\em STOC 2018}, pages 123--131, 2018.

\bibitem{gonczarowski2021sample}
Yannai~A Gonczarowski and S~Matthew Weinberg.
\newblock The sample complexity of up-to-$\varepsilon$ multi-dimensional revenue maximization.
\newblock {\em Journal of the ACM}, 68(3):1--28, 2021.

\bibitem{guruganesh2021contracts}
Guru Guruganesh, Jon Schneider, and Joshua~R Wang.
\newblock Contracts under moral hazard and adverse selection.
\newblock In {\em EC 2021}, pages 563--582, 2021.

\bibitem{DBLP:conf/aies/Hadfield-Menell19a}
Dylan Hadfield{-}Menell and Gillian~K. Hadfield.
\newblock Incomplete contracting and {AI} alignment.
\newblock In {\em {AIES 2023}}, pages 417--422, 2019.

\bibitem{HoSV16}
Chien{-}Ju Ho, Aleksandrs Slivkins, and Jennifer~Wortman Vaughan.
\newblock Adaptive contract design for crowdsourcing markets: Bandit algorithms for repeated principal-agent problems.
\newblock {\em Journal of Artificial Intelligence Research}, 55:317--359, 2016.

\bibitem{kamenica2011bayesian}
Emir Kamenica and Matthew Gentzkow.
\newblock {B}ayesian persuasion.
\newblock {\em American Economic Review}, 101(6):2590--2615, 2011.

\bibitem{KleinbergR20}
Jon Kleinberg and Manish Raghavan.
\newblock Algorithmic classification and strategic effort.
\newblock {\em ACM SIGecom Exchanges}, 18(2):45--52, 2020.

\bibitem{KleinbergR19}
Jon~M. Kleinberg and Manish Raghavan.
\newblock How do classifiers induce agents to invest effort strategically?
\newblock {\em {ACM} Transactions on Economics and Computation}, 8(4):19:1--19:23, 2020.

\bibitem{liu2010robust}
Xiaoyang Liu and Jinde Cao.
\newblock Robust state estimation for neural networks with discontinuous activations.
\newblock {\em IEEE Transactions on Systems, Man, and Cybernetics, Part B (Cybernetics)}, 40(6):1425--1437, 2010.

\bibitem{martimort2009theory}
David Martimort and Jean-Jacques Laffont.
\newblock {\em The theory of incentives: The principal-agent model}.
\newblock Princeton University Press, 2009.

\bibitem{massa2022approximation}
Paolo Massa, Sara Garbarino, and Federico Benvenuto.
\newblock Approximation of discontinuous inverse operators with neural networks.
\newblock {\em Inverse Problems}, 38(10):105001, 2022.

\bibitem{mitchell2011pulp}
Stuart Mitchell, Michael OSullivan, and Iain Dunning.
\newblock Pulp: a linear programming toolkit for python.
\newblock {\em The University of Auckland, Auckland, New Zealand}, 65, 2011.

\bibitem{MuZT22}
Tong Mu, Stephan Zheng, and Alexander Trott.
\newblock Solving dynamic principal-agent problems with a rationally inattentive principal.
\newblock {\em arXiv preprint arXiv:2202.01691}, 2022.

\bibitem{potra2000interior}
Florian~A Potra and Stephen~J Wright.
\newblock Interior-point methods.
\newblock {\em Journal of Computational and Applied Mathematics}, 124(1-2):281--302, 2000.

\bibitem{rosenblatt1958perceptron}
Frank Rosenblatt.
\newblock The perceptron: a probabilistic model for information storage and organization in the brain.
\newblock {\em Psychological review}, 65(6):386, 1958.

\bibitem{Nobel16}
{Royal Swedish Academy of Sciences}.
\newblock {S}cientific background on the 2016 {N}obel {Pr}ize in {E}conomic {S}ciences, 2016.

\bibitem{Salanie17}
Bernard Salanie.
\newblock {\em The Economics of Contracts: A Primer}.
\newblock MIT press, 2017.

\bibitem{wright2001convergence}
Stephen~J Wright.
\newblock On the convergence of the {N}ewton/log-barrier method.
\newblock {\em Mathematical Programming}, 90:71--100, 2001.

\bibitem{xu2015empirical}
Bing Xu, Naiyan Wang, Tianqi Chen, and Mu~Li.
\newblock Empirical evaluation of rectified activations in convolutional network.
\newblock {\em arXiv preprint arXiv:1505.00853}, 2015.

\bibitem{zhu2022sample}
Banghua Zhu, Stephen Bates, Zhuoran Yang, Yixin Wang, Jiantao Jiao, and Michael~I. Jordan.
\newblock The sample complexity of online contract design.
\newblock In {\em EC '23}, page 1188. ACM, 2023.

\end{thebibliography}

\newpage
\appendix
\section{Proofs of Lemma~\ref{lem:discontinous}-\ref{lem:cm_ua}}\label{appx:proof}

In Sec.~\ref{sec:analysis}, we study the geometric structure of optimal contracts by establishing four lemmas. While some of them have been stated and proved for linear contracts~\cite{dutting2019simple}, and the first two lemmas, at least, are not surprising, we give formal proofs of these properties of the principal's utility and optimal contracts in the general case. An important property of the principal's utility function $u^p$ that we state in Lemma~\ref{lem:discontinous}  is that $u^p$ can be a discontinuous function.
\lemmadisc*
\begin{proof}
For a contract $\vf$ on the boundary of two neighboring linear pieces $\mu^p_i$ and $\mu^p_{j\neq i}$, the agent is indifferent between action $a_i$ and $a_j$ given $\vf$: $\mu^a_i(\vf)=\mu^a_j(\vf).$ The principal's utility 
\begin{align}
\mu^p_i(\vf)&=\mathbb{E}_{o\sim p(\cdot|a_i)}[v_o]-\mathbb{E}_{o\sim p(\cdot|a_i)}[f_o]\\
&=\mathbb{E}_{o\sim p(\cdot|a_i)}[v_o]-c(a_i)-(\mathbb{E}_{o\sim p(\cdot|a_i)}[f_o]-c(a_i))\\
&=\mathbb{E}_{o\sim p(\cdot|a_i)}[v_o]-c(a_i)-\mu^a_i(\vf)\\
&=\mathbb{E}_{o\sim p(\cdot|a_i)}[v_o]-c(a_i)-\mu^a_j(\vf)\\
&=\mu^p_j(\vf)+\mathbb{E}_{o\sim p(\cdot|a_i)}[v_o]-c(a_i)-(\mathbb{E}_{o\sim p(\cdot|a_j)}[v_o]-c(a_j)).
\end{align}
It is possible that
\begin{align}
    \mathbb{E}_{o\sim p(\cdot|a_i)}[v_o]-c(a_i)\ne \mathbb{E}_{o\sim p(\cdot|a_j)}[v_o]-c(a_j).
\end{align}
Function $u^p$ is discontinuous in this case.
\end{proof}

Another property of the principal's utility function $u^p$ that motivates our discontinuous neural networks is   Lemma~\ref{lem:ond}.
\lemmaond*
\begin{proof}
    Suppose that the global optimal $\vf^*$ is not on a boundary but is an interior point of a linear piece $\mathcal{Q}_i$ (contracts in $\mathcal{Q}_i$ encourage the agent to take action $a_i$.):
    \begin{align}
    \vf^* \in \mathcal{Q}_i - \partial \mathcal{Q}_i,
    \end{align}
    where
    \begin{align}
    \mathcal{Q}_i &= \cap_{j\ne i} \Gamma_{i,j}; \\
    \Gamma_{i,j} &= \left\{\vf\in\mathcal{F}\ |\ \mathbb{E}_{o\sim p(\cdot|a_i)}\left[f_o\right]-c(a_i)\ge \mathbb{E}_{o\sim p(\cdot|a_j)}\left[f_o\right]-c(a_j)\right\}, \forall j\ne i.
\end{align}
It follows that
\begin{align}
    \vf^* \in \left\{\vf\in\mathcal{F}\ |\ \mathbb{E}_{o\sim p(\cdot|a_i)}\left[f_o\right]-c(a_i)> \mathbb{E}_{o\sim p(\cdot|a_j)}\left[f_o\right]-c(a_j), \forall j\ne i\right\},
\end{align}
where the inequality is strict. Let 
\begin{align}
    \epsilon_{i,j} = \mathbb{E}_{o\sim p(\cdot|a_i)}\left[f^*_o\right]-c(a_i)- \mathbb{E}_{o\sim p(\cdot|a_j)}\left[f^*_o\right]+c(a_j).
\end{align}
It holds that $\epsilon_{i,j}>0, \forall j\ne i$. 

Now we consider the contract $\vf'=\vf^* - \delta p(\cdot | a_i)$ for some small $\delta>0$. For any $a_j\ne a_i$, we have 
\begin{align}
    & u^a(\vf';a_i) - u^a(\vf';a_j)\nonumber \\
    =& \mathbb{E}_{o\sim p(\cdot|a_i)}\left[f^*_o-\delta p(o|a_i)\right]-c(a_i) - \mathbb{E}_{o\sim p(\cdot|a_j)}\left[f^*_o-\delta p(o|a_i)\right]+c(a_j) \\
    =& \epsilon_{i,j} - \delta\mathbb{E}_{o\sim p(\cdot|a_i)}\left[p(o|a_i)\right] + \delta\mathbb{E}_{o\sim p(\cdot|a_j)}\left[p(o|a_i)\right].\nonumber
\end{align}
When
\begin{align}
    0 < \delta < \min_{j\ne i}\frac{\epsilon_{i,j}}{\mathbb{E}_{o\sim p(\cdot|a_i)}\left[p(o|a_i)\right] - \mathbb{E}_{o\sim p(\cdot|a_j)}\left[p(o|a_i)\right]},
\end{align}
(where note the denominator is $>0$), we have 
\begin{align}
    u^a(\vf';a_i) - u^a(\vf';a_j) > 0, \forall j\ne i,
\end{align}
which means $\vf'$ incentivizes the agent to take action $a_i$. Therefore, for $\delta$ in this range, the principal's utility given $\vf'$ is:
\begin{align}
    u^p(\vf') &= \mathbb{E}_{o\sim p(\cdot|a_i)}\left[v_o - f^*_o + \delta p(o|a_i)\right] \nonumber\\
    &= \mathbb{E}_{o\sim p(\cdot|a_i)}\left[v_o - f^*_o\right] + \mathbb{E}_{o\sim p(\cdot|a_i)}\left[\delta p(o|a_i)\right] \\
    &= u^p(\vf^*) + \delta \sum_o p^2(o|a_i) \nonumber\\
    &> u^p(\vf^*).\nonumber
\end{align}
We thus find a contract $\vf'$ that induces greater utility for the principal, contradicting with the fact that $\vf^*$ is the global optimal contract. This finishes the proof.

Note that here we consider the boundary resulting from changes in the agent's best responses. We can extend the proof to cover another type of boundary, which pertains to the requirement that contracts are non-negative, by defining $\mathcal{Q}_i$ to be $\mathcal{Q}_i=\{\vf|\vf\ge 0\}\cap\Gamma_{i,1}\cap\cdots\cap\Gamma_{i,i-1}\cap\Gamma_{i,i+1}\cap\cdots$.
\end{proof}

Lemma~\ref{lem:cm_ua} claims another property that may influence the design of DeLU networks.
\lemmaconvex*
\begin{proof}
We first prove the agent's utility function $u^a(\vf)$ is a convex function.

We need to prove that, for any two contracts $\vf^{(1)}\in\mathcal{F}$ and $\vf^{(2)}\in\mathcal{F}$, it holds that $u^p(\lambda \vf^{(1)} + (1-\lambda) \vf^{(2)})\le \lambda u^p(\vf^{(1)}) + (1-\lambda) u^p(\vf^{(2)})$, $\forall \lambda\in[0,1]$. Denote $\vd = \vf^{(2)} - \vf^{(1)}$. It suffices to prove that the  derivative of $u^p(\vf^{(1)} + \delta \vd)$ with respect to $\delta$ is a non-decreasing function for $\delta\in[0,1]$.

We have
\begin{align}
    \frac{\partial}{\partial \delta} u^a(\vf^{(1)} + \delta \vd) &= \frac{\partial}{\partial \delta} \left[\mathbb{E}_{o\sim p(\cdot|a_\delta)}[f^{(1)}_o + \delta d_o]-c(a_\delta)\right],
\end{align}
where $a_\delta$ is the agent's action given the contract $\vf^{(1)} + \delta \vd$. 

Case 1: When $a_\delta$ does not change,
\begin{align}
    \frac{\partial}{\partial \delta} u^a(\vf^{(1)} + \delta \vd) &= \mathbb{E}_{o\sim p(\cdot|a_\delta)}[d_o]
\end{align}
is a constant, which is a non-decreasing function.

Case 2: When $a_\delta$ changes. Suppose that there exists $\delta_1\in[0,1]$ such that $\vf^{(1)} + \delta_1 \vd$ is on the boundary of linear piece $\mathcal{Q}_i$ and $\mathcal{Q}_j$. Let 
\begin{align}
    \delta_1^+ &=\delta_1+\epsilon \nonumber, \\
    \delta_1^- &=\delta_1-\epsilon,
\end{align}
where $\epsilon$ is a small number and 
\begin{align}
    &\vf^{(1)} + \delta_1^- \vd \in \mathcal{Q}_i, \nonumber\\
    &\vf^{(1)} + \delta_1^+ \vd \in \mathcal{Q}_j.
\end{align}
Because the agent is self-interested, it follows that  $a_j$ is the best response when the contract is $\vf^{(1)} + \delta_1^+ \vd$:
\begin{align}
    u^a(\vf^{(1)} + \delta_1^+ \vd) = u^a(\vf^{(1)} + \delta_1^+ \vd; a_j) > u^a(\vf^{(1)} + \delta_1^+ \vd; a_i).
\end{align}
It follows that
\begin{align}
    \left.\frac{\partial}{\partial \delta} u^a(\vf^{(1)} + \delta \vd)\right|_{\delta=\delta_1^+} &= \lim_{\epsilon\rightarrow 0}\frac{u^a(\vf^{(1)} + \delta_1^+ \vd)-u^a(\vf^{(1)} + \delta_1 \vd)} {\epsilon} \\
    &> \lim_{\epsilon\rightarrow 0}\frac{u^a(\vf^{(1)} + \delta_1^+ \vd; a_i)-u^a(\vf^{(1)} + \delta_1 \vd)} {\epsilon}.\label{eq:prev_1}
\end{align}
We now look at the two terms in the numerator of Eq.~\ref{eq:prev_1}:
\begin{align}
    u^a(\vf^{(1)} + \delta_1^+ \vd; a_i) &= \mathbb{E}_{o\sim p(\cdot|a_i)}[f^{(1)}_o + \delta_1^+ d_o]-c(a_i) \\
    &= \mathbb{E}_{o\sim p(\cdot|a_i)}[f^{(1)}_o + (\delta_1+ \epsilon) d_o]-c(a_i) \\
    &= \mathbb{E}_{o\sim p(\cdot|a_i)}[f^{(1)}_o + \delta_1 d_o]-c(a_i) + \epsilon\mathbb{E}_{o\sim p(\cdot|a_i)}[d_o] \\
    &= u^a(\vf^{(1)} + \delta_1 \vd) + \epsilon\mathbb{E}_{o\sim p(\cdot|a_i)}[d_o],
\end{align}
and
\begin{align}
    u^a(\vf^{(1)} + \delta_1 \vd) &= \mathbb{E}_{o\sim p(\cdot|a_i)}[f^{(1)}_o + \delta_1 d_o]-c(a_i) \\
    &= \mathbb{E}_{o\sim p(\cdot|a_i)}[f^{(1)}_o + (\delta_1^- + \epsilon) d_o]-c(a_i) \\
    &= \mathbb{E}_{o\sim p(\cdot|a_i)}[f^{(1)}_o + \delta_1^- d_o]-c(a_i) + \epsilon\mathbb{E}_{o\sim p(\cdot|a_i)}[d_o]\\
    &= u^a(\vf^{(1)} + \delta_1^- \vd) + \epsilon\mathbb{E}_{o\sim p(\cdot|a_i)}[d_o].
\end{align}
Therefore,
\begin{align}
    \left.\frac{\partial}{\partial \delta} u^a(\vf^{(1)} + \delta \vd)\right|_{\delta=\delta_1^+} &> \lim_{\epsilon\rightarrow 0}\frac{u^a(\vf^{(1)} + \delta_1^+ \vd; a_i)-u^a(\vf^{(1)} + \delta_1 \vd)} {\epsilon}\nonumber \\
    &= \lim_{\epsilon\rightarrow 0}\frac{u^a(\vf^{(1)} + \delta_1 \vd)-u^a(\vf^{(1)} + \delta_1^- \vd)} {\epsilon}\\
    &= \left.\frac{\partial}{\partial \delta} u^a(\vf^{(1)} + \delta \vd)\right|_{\delta=\delta_1^-}\nonumber.
\end{align}
This finishes the proof that $u^a$ is a convex function.
Furthermore, we have that
\begin{align}
    u^p(\vf) = -u^a(\vf) - c(a^*(\vf)) + \mathbb{E}_{o\sim p(\cdot|a^*(\vf))}[v_o],
\end{align}
where $-u^a(\vf)$ is a concave function and $- c(a^*(\vf)) + \mathbb{E}_{o\sim p(\cdot|a^*(\vf))}[v_o]$ is a piecewise constant function with the value $- c(a_i) + \mathbb{E}_{o\sim p(\cdot|a_i)}[v_o]$ when $\vf\in\mathcal{Q}_i$.
\end{proof}

\section{Why we use another network to generate the last-layer bias?}\label{appx:why_bias_network}

To model model the dependency of the last-layer bias on the activation pattern, we use a neural network, rather than a simpler, linear, and learnable function. The reason is that the bias does not always depend linearly on the activation pattern. Here is an example to illustrate this. There are two outcomes with values $\mathbf{v}=[20,1]$,  four actions with costs $\mathbf{c}=[1.0,2.1,2.3,4.7]$, and the action-outcome transition kernel is$$P=\begin{bmatrix}0.211&0.789\\0.398&0.602\\0.430&0.570\\0.684&0.316\end{bmatrix}.$$
Suppose we consider linear contracts, where $\mathbf{f}=\alpha\mathbf{v},\alpha>0$. Then the principal's utility function for different contracts is:$$u^p(\alpha)=\begin{cases}-5\alpha+5&0.2<\alpha<0.3\\-8.57\alpha+8.57&0.3<\alpha<0.4\\-9.17\alpha+9.17&0.4<\alpha<0.5\\-14\alpha+14&\alpha>0.5\\ \end{cases}.$$
Suppose that we have a 2-dimensional activation pattern, and the linear function converting activation patterns to the bias has parameters $[b_1,b_2]$. Then the bias for each of the four pieces would be $0$, $b_1$, $b_2$, and $b_1+b_2$, respectively. The difference between each piece's bias needs to model the discontinuity at contract parameter $\alpha=0.3,0.4,0.5$, but this is impossible with this linear model. To see this, we first assume that the piece $0.2<\alpha<0.3$ has bias 0. Then the differences of biases of the other 3 pieces would need to be 2.5, 2.86, and 5.28, which cannot be achieved with $b_1$, $b_2$, and $b_1+b_2$. It can be easily verified that the cases where other pieces have a bias of 0 are similar, demonstrating that a linear bias function cannot express the discontinuity. By contrast, appealing to a second network allows for non-linear dependency on activation, and can handle this problem.

\section{Introducing Concavity into the DeLU Network}\label{appx:concave}

From Lemma~\ref{lem:cm_ua},  the
 principal’s utility function $u^p$ is a summation of a concave function and a piecewise constant function. 
Further, our DeLU network, which is used to approximate the function $u^p$, can  be written as 
\begin{align}\label{equ:delu_output}
    \xi(\vf_i;\theta_\eta,\theta_\zeta) = \eta(\vf_i;\theta_\eta) + \zeta(r(\vf_i);\theta_\zeta).
\end{align}

In particular, 
$\zeta(r(\vf_i);\theta_\zeta)$ is a piecewise constant function,
 because given an activation pattern $r(\vf_i)$, $\zeta$ is a constant. 
However, the first term in Eq.~\ref{equ:delu_output}, in a general DeLU network, is an arbitrary function. 
This raises the question as to whether it is useful to further restrict the network architecture, constraining the sub-network $\eta$ to be a concave function.
In this section, we discuss how to introduce concavity into $\eta$, and how  this restriction affects the performance of DeLU-based contract design.

\begin{figure}
    \centering
    \includegraphics[width=\linewidth]{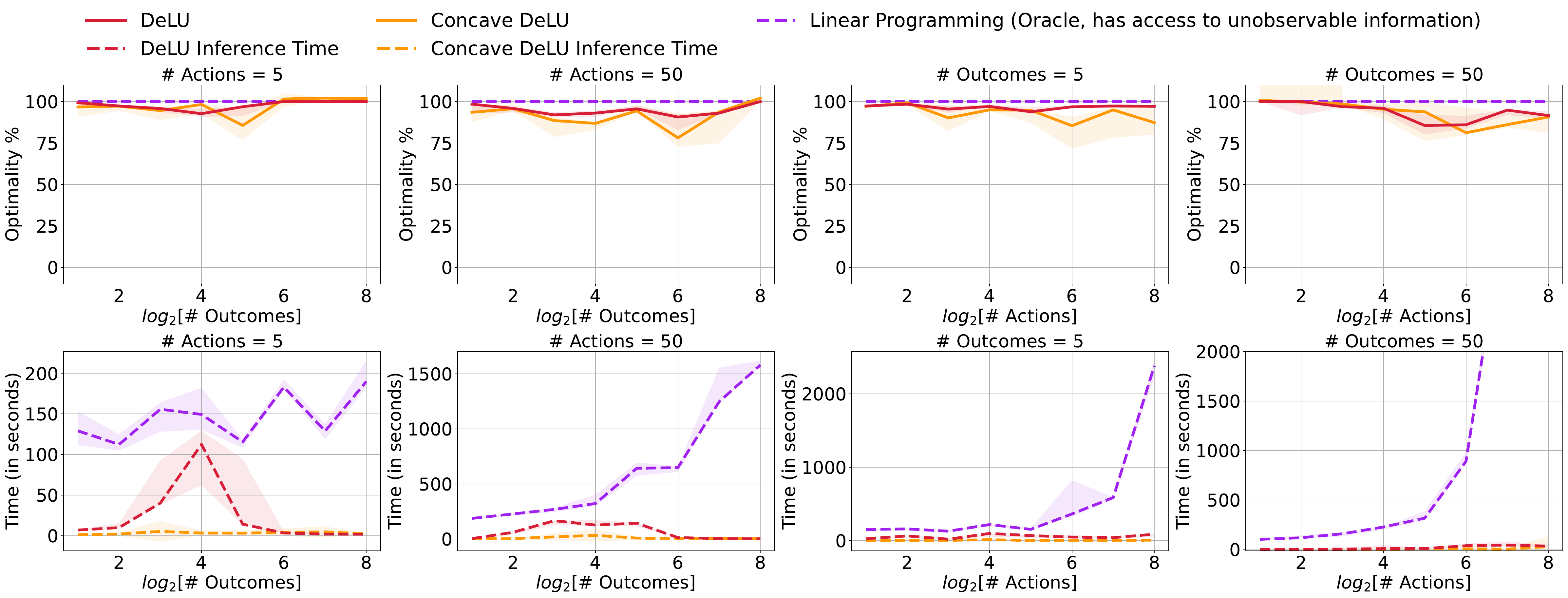}
    \caption{Optimality (normalized principal utility, divided by the result obtained by the direct LP solver $\mathtt{Oracle\ LP}$) and inference time of DeLU, $\mathtt{Concave\ DeLU}$, and $\mathtt{Oracle\ LP}$ on increasing problem sizes.}
    \label{fig:concave}
\end{figure}
\subsection{Concave DeLU architecture}
We can make $\eta$ a concave function by (1) enforcing its weights (for all but the first layer) to be non-negative, and (2) taking the negation of its output. The first modification will make $\eta$ a convex function because $\eta$ uses ReLU activation, which is a convex and non-decreasing function. When the weights after a ReLU activation are non-negative, $\eta$ becomes a composition of several convex, non-decreasing functions, which is still a convex function. Since the negation of a convex function is a concave function, the second modification will make $\eta$ concave.

Formally, in $\mathtt{Concave\ DeLU}$, the sub-network $\eta$ calculates
\begin{align}
    \vh^{(L)}(\vx)&=|\mW^{(L)}|\mR^{(L-1)}(\vx)\left(\cdots\left(|\mW^{(2)}|\mR^{(1)}(\vx)\left(\mW^{(1)}\vx+\vb^{(1)}\right)+\vb^{(2)}\right)\cdots\right)+\vb^{(L)},\label{equ:cdelu_layered}\nonumber \\
    \eta(\vx)&= -|\mW^{(L+1)}|\mR^{(L)}\vh^{(L)}(\vx),
\end{align}
where $\mR^{(k)}$ is a diagonal matrix with diagonal elements equal to the layer activation pattern $\vr^{(k)}$. The other components, as well as the training process, of $\mathtt{Concave\ DeLU}$ are the same as a DeLU network. In the next sub-section, we evaluate the performance of $\mathtt{Concave\ DeLU}$.

\subsection{Experiments on Concave DeLU networks}

In Fig.~\ref{fig:concave}, we compare $\mathtt{Concave\ DeLU}$ against DeLU and the direct LP solver ($\mathtt{Oracle\ LP}$). For this, we fix DeLU and $\mathtt{Concave\ DeLU}$ to have the same size, and both use the LP inference algorithm. We test different problem sizes. In the first and second column, we set the number of actions $n$ to 5 and 50, respectively, and increase the number of outcomes $m$ from $2^1$ to $2^8$. In the third and fourth column, we set the number of outcomes $m$ to 5 and 50, respectively, and increase the number of actions $n$ from $2^1$ to $2^8$. For each problem size, the same 12 combinations of $\alpha_p$ and $\beta_p$ are tested. The median performance as well as the first and third quartile (shaped area) of these 12 combinations are shown. The first row compares optimality, while the second row compares inference time efficiency. We again report normalized principal utilities when it comes to optimality.

A surprising result is that for most problem sizes, DeLU achieves better optimality than $\mathtt{Concave\ DeLU}$. Although the function class represented by $\mathtt{Concave\ DeLU}$ is a better fit with $u^p$, it seems that the 
larger function class of DeLU aids optimization and leads to better performance. At the same time, it is somewhat surprising that $\mathtt{Concave\ DeLU}$ can reduce inference time substantially.
 Unlike DeLU, the inference time of $\mathtt{Concave\ DeLU}$ remains relatively stable when the problem size increases. We conjecture that this
 behavior can be attributed to the non-negativity constraint on network weights, which reduces the number of valid activation patterns and speeds up LP inference.

\section{Experimental Setups}\label{appx:exp_settings}

\subsection{Network architecture and training}

We use a  simple architecture for the DeLU network. In all experiments, the sub-network $\eta$ has one hidden layer with 32 hidden units and ReLU activations. We deliberately restrict the size of this sub-network to limit the number of valid activation patterns and speedup LP-based inference. However, this restriction may reduce the representational capacity of DeLU networks: it is in contrast to the common practice of overparameterization, which has contributed to the success of deep learning. To alleviate this concern, we employ a relatively larger network for the bias network $\zeta$. In our experiments, $\zeta$ has a hidden layer with 512 (Tanh-activated) neurons. 

The two sub-networks $\eta$ and $\zeta$ are trained in an end-to-end manner by the MSE loss (Eq.~\ref{equ:loss}). The optimization is conducted using RMSprop with a learning rate of $1\times 10^{-3}$, $\alpha$ of 0.99, and with no momentum or weight decay. For the DeLU and the baseline ReLU networks, training samples are randomly shuffled in each of the training epochs.

\subsection{Infrastructure}

Across all experiments, DeLU, and the baseline ReLU networks are trained on a NVIDIA A100 GPU. Gradient-based inference is also parallelized on the A100 GPU. The direct LP solver ($\mathtt{Oracle\ LP}$) and the LP-based inference algorithm are based on the linear programming toolkit PuLP~\cite{mitchell2011pulp}, and we parallelize five LP solvers on CPUs.

\section{More Experiments}\label{appx:exp}

In this section, we carry out experiments to study (1) using random samples to initialize gradient-based inference (Appx.~\ref{appx:exp_random_sample_grad_inf}); and (2) the influence of cost correlation on the optimality of DeLU learners (Appx.~\ref{appx:exp_cor}).


\subsection{Gradient-based inference initialized with random samples}\label{appx:exp_random_sample_grad_inf}
In Sec.~\ref{sec:method-grad_inference}, we introduce a gradient-based inference algorithm, and Alg.~\ref{alg:parallel} gives the matrix-form expression of its parallel implementation. There is a choice in using  Alg.~\ref{alg:parallel}, as to whether the input (``probe points")  $\mX^{(0)}\in\mathbb{R}^{K\times m}$ is taken from the training set or a different, random sample set. In this section, we report the results of 
 experiments to empirically compare these two setups.

We start from the same trained DeLU networks on small ($n=5,m=16$), middle ($n=32,m=50$), and large ($n=50,m=128$) problem sizes, where $n$ is the number of actions and $m$ is the number of outcomes. For each problem size, we consider 12 different combinations of $(\alpha_p, \beta_p)$ as in other experiments and report the mean and variance of the performance. We run Alg.~\ref{alg:parallel} with two different inputs $\mX^{(0)}$: $\mathtt{Training\ Set}$ uses 50$K$ training samples while $\mathtt{Random\ Set}$ uses 50$K$ randomly generated contracts. In Table.~\ref{tab:random_sample_grad_inf}, we present the normalized principal utility (divided by the result obtained by $\mathtt{Oracle\ LP}$) of these two setups.

\begin{table}[t]
    \centering
    \caption{Optimality (normalized principal utility \%) of two setups of gradient-based inference: using the training set ($\mathtt{Training\ Set}$) or a random sample set ($\mathtt{Random\ Set}$) as input $\mX^{(0)}$ of Alg.~\ref{alg:parallel}. Mean and variance over 12 different combinations of $(\alpha_p, \beta_p)$ are shown. In this table, the problem size is defined by ($n$, $m$), where $n$ is the number of actions and $m$ is the number of outcomes.}
    \vspace{0.3em}
    \begin{tabular}{ccc}
    \toprule
        Problem size ($n$, $m$)&  $\mathtt{Training\ Set}$ & $\mathtt{Random\ Set}$\\
        \toprule
        (5, 16) & 95.29 $\pm$ 0.20 	& 95.33 $\pm$ 0.19 \\
        (32, 50) & 88.43 $\pm$ 0.97   & 88.46 $\pm$ 0.98 \\
        (50, 128) & 94.28 $\pm$ 0.02 	& 94.28 $\pm$ 0.02 \\
        \toprule
    \end{tabular}
    \label{tab:random_sample_grad_inf}
\end{table}
We can observe that the optimality of these two setups are very close, especially when the problem size is large. We thus recommend running Alg.~\ref{alg:parallel} initialized with the training set to reduce the possible time and memory overhead of generating a new random sample set.

\subsection{Influence of correlated costs}\label{appx:exp_cor}

Across our experiments, we test 12 different combinations of $\alpha_p$ and $\beta_p$, where $(\alpha_p,\beta_p)\in\{0.5, 0.7, 0.9\}\times\{0, 0.3, 0.6, 0.9\}$. Recall that a greater $\beta_p$ value means we put more weights on the independent cost, and a greater $\alpha_p$ value indicates that the correlated cost is more close to the expected value of the action. It is interesting to investigate the influence of these two parameters on the performance of DeLU contract designers.

\begin{table}[t]
    \centering
    \caption{Optimality (normalized principal utilities \%) of DeLU networks under different combinations of $(\alpha_p,\beta_p)$. Mean and variance over 32 problem sizes are shown.}
    \begin{tabular}{cccc}
    \toprule
    $\textcolor{white}{\beta_p=0.0}$\ \ \ \  \vline& $\alpha_p=0.5$ & $\alpha_p=0.7$ & $\alpha_p=0.9$ \\
    \midrule
    $\beta_p=0.0$\ \ \ \  \vline& 88.79$\pm$12.77 & 	89.67$\pm$12.57 & 	87.20$\pm$11.97 \\
    $\beta_p=0.3$\ \ \ \  \vline& 93.94$\pm$11.16 & 	93.28$\pm$11.48 & 	87.94$\pm$13.45 \\
    $\beta_p=0.6$\ \ \ \  \vline& 95.22$\pm$7.93 & 	95.08$\pm$8.78 & 	91.97$\pm$10.32 \\
    $\beta_p=0.9$\ \ \ \  \vline& 97.23$\pm$6.62 & 	90.69$\pm$18.95 & 	96.43$\pm$7.45 \\
    \bottomrule
    \end{tabular}\label{tab:alpha_beta}
\end{table}

In Table~\ref{tab:alpha_beta}, we show the optimality of DeLU learners (using the LP inference algorithm) under different $\alpha_p$ and $\beta_p$ values. For each $(\alpha_p,\beta_p)$ combination, we test 24 different problem sizes: $(m,n)\in \{25,50,100\}\times\{2,4,8,16,32,64,128,256\}$. We give 
the median and standard deviation of these 24 instances. We  observe that $\beta_p$ exerts influence on the performance of DeLU networks: optimality of the learned contracts generally increases for greater values of $\beta_p$, whatever the value of $\alpha_p$, and we see that DeLU seems  better suited to handle problems in which the costs of actions are relatively independent of the expected values of actions. This claim is also supported by the results regarding $\alpha_p$, where the optimality under $\alpha_p=0.5$ is typically better than those under other $\alpha_p$ values for most $\beta_p$ values. It will be interesting in future work to further study this phenomenon, and to see whether suitable modifications can be made to the DeLU architecture to further improve optimality in regimes with higher cost correlation.


\end{document}